\documentclass{l4dc2027}

\usepackage{times}
\usepackage{booktabs}
\usepackage{array}
\usepackage[figuresright]{rotating}

\newcommand{\CVaR}{\operatorname{CVaR}}

\newcommand{\lam}{\lambda}
\newcommand{\levels}{\mathcal{L}}

\newcommand{\Nevals}{N_{\mathrm{eval}}}
\newcommand{\Rspread}{R}

\newcommand{\dsCEMstyleFull}{dsCEM-style (substituted point set)}
\newcommand{\dsCEMstyle}{dsCEM-style}

\newcommand{\asymci}[3]{\ensuremath{#1\,{}_{\scriptscriptstyle -#2}^{\scriptscriptstyle +#3}}}
\newcommand{\upperci}[2]{\ensuremath{#1\,{}^{\scriptscriptstyle +#2}}}
 
\makeatletter
\renewcommand*{\@jmlrenddoc}{\FloatBarrier\global\let\@reprint\@empty}
\makeatother
\title[QPP: deterministic level-curve risk evaluation]{Quantile Propagation Planning for Budgeted Tail-Risk Evaluation}
\coltauthor{\Name{Weijia Han}\textsuperscript{1},\enspace
  \Name{Zhi Huang}\textsuperscript{2},\enspace
  \Name{Lisha Qu}\textsuperscript{1},\enspace
  \Name{Yuanyuan Tan}\textsuperscript{3},\\
  \Name{Jiayue Meng}\textsuperscript{4},\enspace
  \Name{Wenjun Chen}\textsuperscript{5},\enspace
  \Name{Liying Liang}\textsuperscript{3},\enspace and\enspace
  \Name{Bangjie Deng}\textsuperscript{6}\\
  \addr\footnotesize \textsuperscript{1}U. Washington\enspace
  \textsuperscript{2}U. Rochester\enspace
  \textsuperscript{3}Independent Researcher\enspace
  \textsuperscript{4}Georgia Institute of Technology\enspace
  \textsuperscript{5}Columbia University\enspace
  \textsuperscript{6}U. Penn}
\hypersetup{
  pdftitle={Quantile Propagation Planning for Budgeted Tail-Risk Evaluation},
  pdfauthor={Weijia Han; Zhi Huang; Lisha Qu; Yuanyuan Tan; Jiayue Meng;
    Wenjun Chen; Liying Liang; Bangjie Deng}
}

\begin{document}

\maketitle

\begin{abstract}
Risk-aware planning must identify rare, costly outcomes using only a limited
number of model evaluations. Monte Carlo rollouts spend this budget on noisy
tail estimates, while planning from the mean can miss the tail entirely. We
introduce quantile propagation planning (QPP), which propagates a small, fixed
set of disturbance quantiles through the dynamics and integrates the resulting
cost curve to evaluate both mean and tail risk. The method is exact at the
propagated levels, converges as the quantile grid is refined, and admits a
one-sided conservatism guarantee under stated structural conditions. In controlled
overload and inventory tasks, QPP reduces realized tail cost and constraint
violations relative to mean-rollout planning. On overload, it remains
competitive with sampled evaluators that use substantially more trajectories.
These results position QPP as a transparent, budget-controlled approach to
tail-risk evaluation.
\end{abstract}

\begin{keywords}
quantile propagation, risk-aware planning, model predictive control,
CVaR, cross-entropy method, stochastic model-based planning
\end{keywords}

\section{Introduction}
\label{sec:intro}

Forecasts guide actions before their outcomes are known. Retail demand
forecasts drive purchasing, inventory, logistics, and pricing
~\citep{fildes2022retail}; building and microgrid controllers schedule energy
from uncertain load and weather~\citep{drgona2020buildingmpc}; and portfolio
optimizers are sensitive to errors in estimated returns and covariances
~\citep{kolm2014portfolio}. Learning-based model predictive control (MPC) faces
the same issue: stochastic learned models must be used without losing
constraint reliability~\citep{hewing2020learningmpc}. Optimization can
therefore amplify a modest predictive error by selecting a different action
and propagating its effect through repeated decisions.

Rare errors also have asymmetric consequences. A full-scale vehicle study
reports that a deterministic MPC baseline using one dynamics model may lose
control in adverse conditions, whereas a risk-averse controller accounts for
uncertain friction and tire parameters~\citep{lew2024racingcvar}. Worst-case
protection, however, can make a plan too conservative to use. The practical
goal is therefore to balance typical performance against costly tail outcomes,
not merely to optimize either the mean or the worst case.

This balance becomes a statistical bottleneck when a planner ranks many
candidates under a fixed model-call budget. At tail mass $\alpha$, empirical
CVaR uses only the worst $\alpha$ fraction of sampled trajectories; with
$\alpha=0.1$ and five rollouts, the estimate is controlled by the single most
costly rollout. Concentration results formalize how the sample budget affects
CVaR accuracy and selection error~\citep{prashanth2020cvarbounds}. Recent CVaR
policy-optimization work likewise traces poor sample efficiency to underused
non-tail trajectories~\citep{mead2025returncapping,luo2026cvarquantile}.
Although those methods study policy learning, they expose the same scarcity of
useful tail data.

Stochastic MPC represents uncertainty through distributions, scenarios, or
probabilistic constraints, but the general nonlinear problem remains
computationally demanding~\citep{mesbah2016smpc}. Risk-aware planners use, for
example, stochastic search over nonlinear beliefs~\citep{wang2021adaptivecvar}
or sample-based CVaR constraints~\citep{lew2024racingcvar}. A recent nonlinear
SMPC framework still reports an accuracy--computation trade-off among
uncertainty-propagation methods~\citep{landgraf2026grampcs}; analytic
propagation avoids sampling variance but typically requires compatible
distributional and value-function structure~\citep{sharma2026momentclosure}.
Mean-only planning is cheaper but can hide skew and harmful tails. The open
question is how to rank many plans by mean and tail cost when each candidate
can afford only a handful of model calls.

We introduce \emph{quantile propagation planning} (QPP), which evaluates $K$
fixed disturbance levels and integrates their piecewise-linear cost curve to
obtain its mean and CVaR. QPP holds the same disturbance rank across time and
coordinates, so it evaluates a comonotonic surrogate rather than the
independently disturbed execution process. This shared-rank coupling can be
conservative and may change which plan is preferred.

Our contributions are:
\begin{itemize}
  \setlength{\itemsep}{1pt}
  \setlength{\parsep}{0pt}
  \setlength{\topsep}{2pt}
  \item A deterministic level-curve evaluator that uses $K$ model trajectories
  and does not require an ordering projection.
  \item Coupled-process guarantees for exactness at the chosen levels,
  grid-refinement convergence, and one-sided conservatism under stated
  conditions.
  \item An evaluation across controlled overload and inventory tasks and a
  fitted microgrid, with budget-matched sampling baselines and ablations that
  identify when QPP is effective.
\end{itemize}
 \section{Related Work}
\label{sec:related}

\paragraph{Sampling-based planning.} PETS~\citep{chua2018pets} propagates
uncertainty through an ensemble. iCEM~\citep{pinneri2020icem} improves CEM
candidates, whereas the \dsCEMstyleFull{} construction~\citep{walker2026dscem}
replaces random candidates by deterministic samples. QPP instead changes
candidate \emph{evaluation}; the composition study therefore crosses QPP
with both samplers and separates candidate-distribution effects from loop
mechanics. On real-load data, the fixed-budget sampling comparison is a
PETS-style bootstrap ensemble (Section~\ref{sec:results-ensemble}).

\paragraph{Risk-aware objectives.} The CVaR representation of
\citet{rockafellaruryasev2000cvar} is what makes the level-curve tail
analytically integrable. Risk-averse MPC supplies the dynamic-control
context~\citep{sopasakis2019riskaverse}. Stochastic-search CVaR control
\citep{wang2021adaptivecvar}, sample-based CVaR constraints
\citep{lew2024racingcvar}, quantile-gradient policy optimization
\citep{luo2026cvarquantile}, distributional RL
\citep{dabney2018qrdqn,dabney2018iqn}, and epistemic value distributions
\citep{luis2023eqr} all retain sampled or learned distributions. QPP's
distinction is deterministic level-curve evaluation inside planning.

\paragraph{Deterministic uncertainty propagation.} Non-Gaussian stochastic
MPC has used deterministic quadrature with stochastic Galerkin
\citep{chen2019stochasticgalerkin}, polynomial-chaos prediction and control
\citep{pan2023stochasticfundamental}, and monotonicity-based reachable sets
\citep{heinlein2025monotonicity}. QPP is narrower: it recursively propagates
one scalar level through an evaluable one-step model and integrates a
piecewise-linear mean/CVaR curve. We claim neither general accuracy nor
computational dominance over those methods.

\paragraph{Learned distributional predictions.} Local propagation of
parametric laws~\citep{petersen2024stabledist}, deep multi-step quantile MPC
\citep{chen2025deepquantilempc}, and multi-horizon quantile forecasting
\citep{wen2017mqrnn} differ from QPP's recursive reuse of one latent level in
a one-step model. Distribution-guided diffusion planning
\citep{rsdiffuser2026}, quantile-regularized representations
\citep{qqworld2026}, and point-estimate CEM~\citep{rezazadeh2025wmms3m} are
related hosts or alternatives, not evidence for nonlinear learned-model
performance here.
 \section{Method}
\label{sec:method}

\paragraph{Setting.} A world model is $z_{t+1} = f(z_t, u_t) + \sigma(z_t,
u_t) \odot \varepsilon_t$ with state $z_t \in \mathbb{R}^d$, action $u_t$,
drift $f$, per-coordinate loading $\sigma$ and a standardized scalar
disturbance $\varepsilon_t$ with quantile function $F^{-1}$; a per-step
cost $c(z_t, u_t)$ is charged on the pre-step state. The planner chooses an
open-loop $u_{0:H-1}$ from $z_0$ and executes it on the real, independently
disturbed process; $f$, $\sigma$, $c$ need only be evaluable in batch.

\paragraph{Quantile-vector state.} Fix levels $0 < q_1 < \dots < q_K < 1$
($\levels = \{0.05, 0.25, 0.5, 0.75, 0.95\}$ here). QPP replaces the $S$
particles of a sampled rollout by $K$ deterministic trajectories, one per
level, all started at $z_0$:
\begin{equation}
z^{(k)}_{t+1} = f\big(z^{(k)}_t, u_t\big) + \sigma\big(z^{(k)}_t, u_t\big)\, F^{-1}(q_k),
\qquad k = 1, \dots, K.
\label{eq:propagate}
\end{equation}
Trajectory $k$ reuses $q_k$ at every step and coordinate. Thus the $K$
states evaluate the comonotonic process $Z_t(U)$ (one uniform $U$ shared
across time and coordinates) at $U=q_k$. Set
$C_t^{(k)}=c(z_t^{(k)},u_t)$; each step costs $K$ model evaluations per
candidate.

\begin{figure}[t]
\centering
\includegraphics[width=\textwidth]{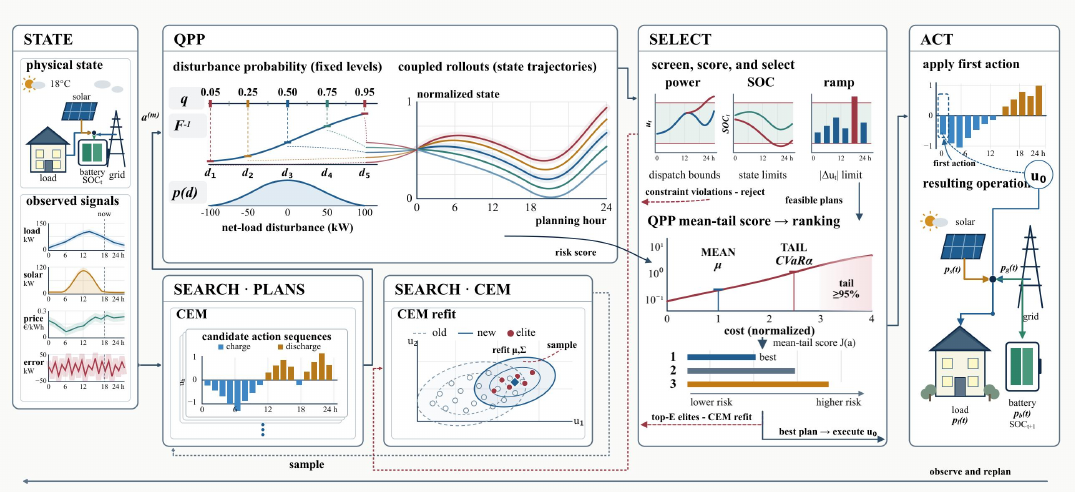}
\caption{QPP in the closed-loop planning pipeline. Fixed disturbance levels
generate coupled trajectories and an analytic mean--tail score; CEM ranks
candidate plans, executes the first action, observes the next state, and
replans.}
\label{fig:qpp-overview}
\end{figure}

\paragraph{Level-curve scoring.}
At each step, let $C_t(\cdot)$ linearly interpolate the points
$(q_k,C_t^{(k)})$. We extend it as a constant on $[0,q_1]$ and $[q_K,1]$,
and let $U\sim\mathrm{Unif}(0,1)$. Its mean is a trapezoid rule
whose end weights include the boundary masses; omitting them creates bias
that does not vanish as the interior grid is refined. We use $\alpha$ for the
\emph{upper-tail probability mass}, so $\CVaR_{0.1}$ is the mean of the
worst $10\%$. We use the representation
\[
  \CVaR_\alpha(X)
  =\min_v\left\{v+\alpha^{-1}\mathbb{E}\big[(X-v)^+\big]\right\}
\]
from \citet{rockafellaruryasev2000cvar}, because $(C_t-v)^+$ has an
analytic integral on every linear segment. The implementation brackets $v$
by the row minimum and maximum and uses $60$ bisection steps;
Appendix~\ref{sec:appendix-scoring} gives the weights and cell integrals. The
result is exact to floating-point precision for the constructed interpolant,
not for the unobserved curve between or beyond the knots. The cost row need
not be sorted: a non-monotone cost can make its largest value occur away from
the largest $q_k$, but integration still follows the level coordinate $u$.

\paragraph{Order restriction along $K$.} Interpreting trajectory $k$ as a
marginal $q_k$ quantile requires each coordinate of $Z_t(U)$ to be
non-decreasing in $U$ (Proposition~\ref{prop:exact}(c)). Pool-adjacent
violators~\citep{barlow1972pav,rwd1988pav} or
rearrangement~\citep{cfg2010rearrangement} can impose this order
(Appendix~\ref{sec:appendix-proofs}), but QPP uses neither: level-curve
scoring does not require ordered costs, and projecting a state would change
the constructed trajectory whenever the condition fails
(Appendix~\ref{sec:appendix-setup-detail}).

\paragraph{Objective.} A candidate $a = u_{0:H-1}$ is scored by
\begin{equation}
J(a) = \sum_{t=0}^{H-1} \Big[ (1 - \lam)\, \mathbb{E}[C_t(U)] + \lam\, \CVaR_\alpha[C_t(U)] \Big],
\qquad \lam \in [0, 1],
\label{eq:objective}
\end{equation}
lower is better: a sum of per-step tail risks, a conservative surrogate for
trajectory CVaR because subadditivity gives
\[
  \sum_t \CVaR_\alpha[C_t]
  \geq \CVaR_\alpha\!\left[\sum_t C_t\right].
\]
This surrogate uses the comonotonic coupling, not the independent process
executed (Sections~\ref{sec:theory}, \ref{sec:results}). At $\lam=0$ the
score is risk-neutral for the coupled
process, which differs from a mean rollout when $c$ is nonlinear; at
$\lam=1$ it is the summed per-step tail. The level-curve evaluator is the
central component, while $\lam$ is an optional decision preference.
Appendix~\ref{sec:appendix-review-sensitivity} separately audits a single
CVaR applied to the total trajectory cost.

Algorithm~\ref{alg:qpp-score} collects the evaluator for one candidate; the
planner evaluates candidates in batch. The same $q_k$ is retained throughout
each trajectory, and the cost row is neither sorted nor projected.
\begin{algorithm}[t]
\DontPrintSemicolon
\caption{QPP level-curve evaluation for one candidate action sequence.}
\label{alg:qpp-score}
\KwIn{State $z_0$, actions $u_{0:H-1}$, levels $\levels$, tail mass $\alpha$,
and weight $\lambda$}
\KwIn{Evaluable $f$, $\sigma$, and $c$, and disturbance quantile function
$F^{-1}$}
\KwOut{Score $J$ and propagated states $\{z_t^{(k)}\}$}
$J \gets 0$ and $z_0^{(k)} \gets z_0$ for $k=1,\ldots,K$\par
\For{$t=0,\ldots,H-1$}{
  \textbf{A. Propagate the fixed levels}\par
  \For{$k=1,\ldots,K$}{
    $C_t^{(k)} \gets c(z_t^{(k)},u_t)$\par
    $z_{t+1}^{(k)} \gets f(z_t^{(k)},u_t)
      + \sigma(z_t^{(k)},u_t)F^{-1}(q_k)$\par
  }
  \textbf{B. Build the level curve}\par
  Form the constant-extended piecewise-linear $C_t$ through
  $(q_k,C_t^{(k)})_{k=1}^K$\par
  \textbf{C. Evaluate and accumulate the score}\par
  $m_t \gets \int_0^1 C_t(u)\,du$ and
  $r_t \gets \CVaR_\alpha[C_t(U)]$ by analytic integration\par
  $J \gets J+(1-\lambda)m_t+\lambda r_t$\par
}
\textbf{return} $J$ and $\{z_t^{(k)}\}$\par
\end{algorithm}

\paragraph{Planner, baselines, cost.} The outer loop is the
cross-entropy method: draw $P$ candidates from a diagonal Gaussian, clip,
score, refit to the best $E$, and finally rescore the elite mean. Both the
sampler (Gaussian, iCEM-style~\citep{pinneri2020icem}, or
\dsCEMstyle{}~\citep{walker2026dscem}) and evaluator are modular. Monte
Carlo uses $S$ independent samples and averages the worst
$\lceil\alpha S\rceil$; mean rollout propagates $f$ alone; moment matching
uses a full central-difference Jacobian with diagonal covariance
(Appendix~\ref{sec:appendix-envs}). QPP uses $K$ model calls per
candidate-step and $O(KB)$ scoring for fixed $B=60$; the other costs are
$S$ calls plus a sort, one call, and $2d+1$ calls, respectively. Thus $S=K$
is the budget-matched QPP--Monte Carlo comparison, with $\Nevals$ denoting
the number of dynamics evaluations.
 \section{Theory}
\label{sec:theory}

$Z_t(U)$ denotes the \emph{coupled process} of Section~\ref{sec:method}:
\[
Z_{t+1}(U)=f(Z_t(U),u_t)+\sigma(Z_t(U),u_t)F^{-1}(U),\qquad Z_0=z_0.
\]
One uniform $U$ is shared across steps and coordinates; $\tilde Z_t$ is
the \emph{independent process}, a fresh $\varepsilon_t$ per step, on which
plans are executed. The guarantees below concern this scalar ordered
disturbance construction; they do not cover arbitrary multivariate or
non-monotone learned dynamics.

The next result separates three claims: exactness at the selected knots,
convergence as the grid is refined, and the stronger conditions under which
the propagated levels are marginal quantiles. It applies to the states and
level curves produced by Algorithm~\ref{alg:qpp-score}.

\begin{proposition}[Three guarantees for QPP]
\label{prop:exact}
\leavevmode
\begin{enumerate}
  \setlength{\itemsep}{2pt}
  \setlength{\parsep}{0pt}
  \setlength{\topsep}{2pt}
  \item[(a)] \textbf{Knot exactness.} For every action sequence, $t$, and
  $k$, $z_t^{(k)}=Z_t(q_k)$. Moreover, $C_t(U)$ is the piecewise-linear
  interpolation of $u\mapsto c(Z_t(u),u_t)$ at the knots $\levels$, with no
  ordering assumption.
  \item[(b)] \textbf{Grid-refinement convergence.} Suppose this cost map is
  continuous on compact subintervals of $(0,1)$ and integrable. If a sequence
  of grids has interior mesh tending to zero, endpoint knots tending to $0$
  and $1$, and uniformly integrable constant-extended interpolants, then the
  level-curve mean and $\CVaR_\alpha$ converge to those of
  $c(Z_t(U),u_t)$.
  \item[(c)] \textbf{Marginal-quantile interpretation.} If, moreover,
  $F^{-1}$ is locally absolutely continuous, $f$ and $\sigma$ are continuously
  differentiable, $\sigma\geq0$, and, for every $u\in(0,1)$ and every step,
  $J_f+F^{-1}(u)J_\sigma$ has non-negative entries along $Z_t(u)$, then each
  coordinate of $Z_t(U)$ is non-decreasing in $U$. Thus
  $z_{t,i}^{(k)}$ is the $q_k$ quantile of $Z_{t,i}(U)$, and the order
  restriction of Section~\ref{sec:method} is a no-op.
\end{enumerate}
\end{proposition}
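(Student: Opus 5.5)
Part (a) is a straightforward induction on $t$. At $t=0$ both $z_0^{(k)}$ and $Z_0(q_k)$ equal $z_0$. If $z_t^{(k)}=Z_t(q_k)$, then substituting $U=q_k$ into the recursion for $Z_{t+1}$ and comparing with \eqref{eq:propagate} gives $z_{t+1}^{(k)}=Z_{t+1}(q_k)$. It follows that $C_t^{(k)}=c(Z_t(q_k),u_t)$. The curve built in step B of Algorithm~\ref{alg:qpp-score} is then, by construction, the constant-extended piecewise-linear interpolant of $g_t(u):=c(Z_t(u),u_t)$ at the knots $\levels$. This construction places knots by level coordinate and never sorts the cost values, so no monotonicity of $g_t$ is used.

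For (b), I first show that the interpolants $C_t^{(n)}$ converge to $g_t$ almost everywhere on $(0,1)$. Fix a compact $[a,b]\subset(0,1)$. For large $n$ the endpoint knots lie outside $[a,b]$, so on $[a,b]$ the interpolant is purely interior linear interpolation. Since $g_t$ is uniformly continuous there and the mesh tends to zero, $\sup_{[a,b]}|C_t^{(n)}-g_t|$ is bounded by the modulus of continuity of $g_t$ at the mesh size, which tends to $0$. Exhausting $(0,1)$ by such intervals gives pointwise convergence on $(0,1)$. The assumed uniform integrability and Vitali's theorem then give $C_t^{(n)}\to g_t$ in $L^1(0,1)$, which is exactly convergence of the means. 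For CVaR, I use the Rockafellar--Uryasev form. The map $X\mapsto \CVaR_\alpha(X)$ is $\alpha^{-1}$-Lipschitz in $L^1$, because $|(x-v)^+-(y-v)^+|\le|x-y|$ and the infimum over $v$ of functions that are uniformly close stays close. Hence $|\CVaR_\alpha(C_t^{(n)}(U))-\CVaR_\alpha(g_t(U))|\le\alpha^{-1}\|C_t^{(n)}-g_t\|_{L^1}\to0$. The main obstacle in (b) is the boundary cells, where the interpolant is constant but $g_t$ may be unbounded. Uniform integrability is exactly what controls these, so I take it as given rather than deriving it.

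For (c), I prove monotonicity by induction with a differentiated recursion. Write $\phi=F^{-1}$. The base case $Z_0(u)=z_0$ is constant in $u$. Suppose $Z_t$ is locally absolutely continuous in $u$ with $\partial_u Z_t\ge0$ almost everywhere. The chain rule for $C^1$ maps composed with AC functions then gives, almost everywhere,
\[
\partial_u Z_{t+1}=\big(J_f+\phi(u)J_\sigma\big)\,\partial_u Z_t+\sigma(Z_t(u),u_t)\,\phi'(u).
\]
Here $J_\sigma\,\partial_uZ_t$ acts coordinatewise through the elementwise product in $\sigma\odot\varepsilon$. The first term is non-negative because the matrix has non-negative entries along $Z_t(u)$ and $\partial_u Z_t\ge0$. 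The second term is non-negative because $\sigma\ge0$ and $\phi'\ge0$, since a quantile function is non-decreasing. Local absolute continuity is preserved, so each coordinate of $Z_{t+1}$ is non-decreasing on $(0,1)$.

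It remains to read off the quantiles. For a non-decreasing function $h$ and $U\sim\mathrm{Unif}(0,1)$, the value $h(q)$ is a $q$-quantile of $h(U)$, since $P(h(U)\le h(q))\ge q$ and $P(h(U)<h(q))\le q$. Combining this with (a) shows that $z^{(k)}_{t,i}$ is a $q_k$ quantile of $Z_{t,i}(U)$. Because the states are already ordered in $k$, PAV and rearrangement return them unchanged. The delicate point in (c) is the chain rule under only local absolute continuity of $\phi$. I would handle it by noting that $C^1\circ$AC is AC on compacts, and that a product of AC functions is AC on compacts.
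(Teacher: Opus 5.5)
Your proposal is correct and follows the paper's proof step for step. For (a) you use induction on $t$. For (b) you show uniform interpolation convergence on compact subintervals, upgrade it to $L^1$ convergence via uniform integrability, and transfer it to $\CVaR_\alpha$ through the $\alpha^{-1}$-Lipschitz bound. For (c) you use the differentiated recursion $\partial_u Z_{t+1}=(J_f+F^{-1}(u)J_\sigma)\,\partial_u Z_t+\sigma\,(F^{-1})'(u)$ with local absolute continuity, followed by the two-sided inequalities that identify $Z_{t,i}(q_k)$ as a generalized $q_k$-quantile. Your extra details (Vitali's theorem, the chain rule for $C^1$ maps composed with absolutely continuous functions, and the Lipschitz argument) only make explicit what the paper states tersely. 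The one cosmetic point is that uniform continuity should be taken on a compact slightly larger than $[a,b]$, because the cells straddling $a$ and $b$ extend past it.
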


The diagnostic in Appendix~\ref{sec:appendix-setup-detail} checks only the
weaker discrete statement that the $K$ propagated trajectories are ordered
along $K$ at every step.

\begin{figure}[t]
\centering
\includegraphics[width=0.96\textwidth]{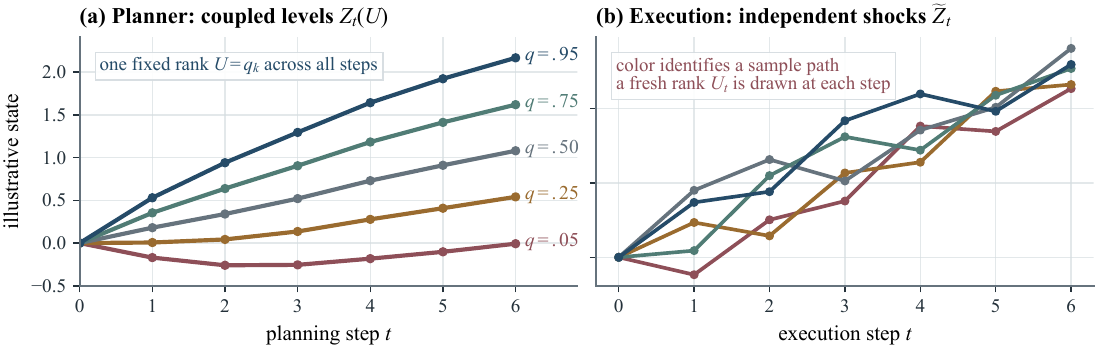}
\caption{Planning holds one rank through the horizon (left; colored endpoint
labels give the five fixed quantile levels), whereas execution draws a fresh
rank at each step (right; colors only distinguish sample paths). Equal
one-step marginals need not imply equal temporal dependence or tail behavior.}
\label{fig:coupling-objective}
\end{figure}

The identities are exact at the selected knots. Against a $100{,}000$-sample
reference, the largest relative errors are $0.0086$ after one step and
$0.0022$ after ten (Appendix~\ref{sec:appendix-setup-detail}). These values do
not measure error between knots: on the monotone-map diagnostic, relative
level-curve interpolation error is $0.1100$ at the deployed $K=5$
($0.1344$/$0.0978$ at $K=3$/$9$). The sign condition holds for overload and
the CityLearn surrogate, but fails for inventory and the rotation map. The
microgrid is order-preserving in the shared level, including at charge
clipping; because clipping is not continuously differentiable, it lies
outside the proposition's smooth sufficient condition
(Section~\ref{sec:setup}; measured interpolation error at most $0.00117$).

\begin{remark}[The spread law]
\label{rem:spread}
For $z' = a z + s\,\varepsilon$ with $0 \le a < 1$, the ratio of the
standard deviation of $Z_H$ under the coupling to that of $\tilde Z_H$ is
\begin{equation}
\Rspread(H, a) = \sqrt{\frac{(1 - a^H)(1 + a)}{(1 + a^H)(1 - a)}},
\label{eq:spread}
\end{equation}
equal to $1$ at $H = 1$, increasing in $H$ and $a$, saturating at
$\sqrt{(1 + a)/(1 - a)}$: the coupling adds shocks coherently, independence
in quadrature.
\end{remark}

The measured gap between propagated and independent-process quantiles on
the Gaussian monotone-diagonal map is monotone in $H$ as~\eqref{eq:spread}
predicts and falls inside the band the closed form implies at every
tested $H$ and persistence (Appendix~\ref{sec:appendix-setup-detail} gives
the values and the simulation check). The closed form is for the
constant-loading scalar case: an analogy for overload (action-scaled
loading) and inapplicable to inventory's stock (a state-dependent loading;
Section~\ref{sec:results-mean} reports the measured ratio). At the
microgrid's fitted persistence and $H = 24$ the ratio is within a percent
of its saturating value, the analytic reason the coupled score there is a
loose upper bound (Section~\ref{sec:results-microgrid}).

\begin{proposition}[One-sided conservatism]
\label{prop:conservative}
Let the drift be linear with non-negative coefficients and the loadings
non-negative constants, so that each $Z_{t,i}$ is a non-negative linear
combination of $\varepsilon_1, \dots, \varepsilon_t$ plus a deterministic
term, and let every per-step cost be a convex non-decreasing function of a
non-negative linear combination of the state (a ridge cost), or a sum of
such. Assume also that the horizon-summed costs and their stop-loss
transforms have finite expectations under both couplings. Then the
horizon-summed cost is larger in the increasing convex
order under the coupling than under independence; in particular
$\CVaR_\alpha[\sum_t c(Z_t(U), u_t)] \ge \CVaR_\alpha[\sum_t c(\tilde Z_t,
u_t)]$ for every $\alpha$.
\end{proposition}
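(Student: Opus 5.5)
The plan is to reduce the horizon-summed cost, under either coupling, to a sum of convex non-decreasing functions of linear forms in the disturbances, and then compare the comonotonic and independent versions via the classical comonotonic-upper-bound theorem for sums in convex order.

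\textbf{Step 1: put both costs in a common form.} Under the hypotheses, unrolling the linear recursion gives
\[
Z_{t,i}=d_{t,i}+\sum_{s<t} b^{(t,i)}_s\,\varepsilon_s ,\qquad b^{(t,i)}_s\ge 0 .
\]
Here $d_{t,i}$ is deterministic and depends only on $z_0$ and $u_{0:H-1}$. Under the coupling every $\varepsilon_s$ is replaced by the same $F^{-1}(U)$; under independence they are i.i.d.\ with quantile function $F^{-1}$. A ridge cost $c=\phi(w^\top z)$ with $w\ge 0$ and $\phi$ convex non-decreasing becomes
\[
\phi\Bigl(e_t+\sum_s \beta_{t,s}\varepsilon_s\Bigr),\qquad \beta_{t,s}=\sum_i w_i b^{(t,i)}_s\ge 0 .
\]
So the total cost is $G(\varepsilon_1,\dots,\varepsilon_H)=\sum_t\phi_t(e_t+\beta_t^\top\varepsilon)$, and the two sides of the claim are $G$ evaluated at the comonotonic vector $(F^{-1}(U),\dots,F^{-1}(U))$ and at the independent vector. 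The two vectors have identical marginals.

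\textbf{Step 2: increasing convex order via supermodularity.} $G$ is supermodular and componentwise non-decreasing. Each summand is a convex non-decreasing function of a non-negative linear combination, and such a function has non-negative mixed second differences, since $\phi(x+a+b)-\phi(x+a)-\phi(x+b)+\phi(x)\ge0$ for $a,b\ge0$. Sums of supermodular functions are supermodular.

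The comonotonic vector is the maximum in the supermodular order among vectors with given marginals (Tchen; Müller--Stoyan). For any increasing convex $g$, the composition $g\circ G$ is again supermodular and non-decreasing: composing an increasing convex function with an increasing supermodular function preserves supermodularity. Hence
\[
\mathbb{E}\,g(G(\text{comon}))\ge \mathbb{E}\,g(G(\text{indep})).
\]
Applying this with $g(x)=(x-v)^+$ gives the stop-loss ordering, which is the increasing convex order. The integrability assumption in the statement is exactly what makes these expectations finite, so the comparison is meaningful.

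\textbf{Step 3: from stop-loss order to CVaR.} Use the Rockafellar--Uryasev representation stated in Section~\ref{sec:method}:
\[
\CVaR_\alpha(X)=\min_v\bigl\{v+\alpha^{-1}\mathbb{E}(X-v)^+\bigr\}.
\]
Pointwise in $v$, the coupled objective dominates the independent one. Taking the minimum over $v$ on both sides preserves the inequality, which gives the claim for every $\alpha$.

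\textbf{Main obstacle.} I expect the hardest step to be verifying the supermodular-order maximality cleanly for unbounded, possibly non-smooth $\phi$ and general $F^{-1}$. I would first prove it for bounded continuous supermodular functions via Lorentz/Tchen's inequality. I would then pass to the limit by truncating $g\circ G$ (taking its minimum with a constant preserves supermodularity for non-decreasing functions) and applying monotone convergence, using the assumed finite expectations.

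A more elementary alternative avoids supermodular-order machinery altogether. For a single ridge summand, the sum $\sum_s\beta_{t,s}\varepsilon_s$ under the coupling is the comonotonic sum, and it dominates the independent sum in convex order by Kaas--Dhaene. However, summing over $t$ mixes different linear forms, and dominance of each term does not give dominance of the total. This is why the joint supermodular argument of Step 2 is the one I would rely on.
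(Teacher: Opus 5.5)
Your main line is the paper's proof, step for step. You reduce the total to $\sum_t\phi_t(e_t+\beta_t^\top\varepsilon)$ with $\beta_t\ge0$, show it is non-decreasing and supermodular, and compose with an increasing convex $g$. You then invoke the comonotonic upper bound for supermodular functions, specialize to $g(x)=(x-v)^+$, and take the infimum in the Rockafellar--Uryasev formula. Your finite-difference check of supermodularity is slightly cleaner than the paper's twice-differentiable-then-approximate argument. The composition fact also needs no smoothness: write $a,b,c,d$ for the values of $G$ at $x\wedge y$, $x$, $y$, $x\vee y$, so that $a\le b,c\le d$ and $a+d\ge b+c$. Monotonicity and convexity of $g$ then give $g(d)+g(a)\ge g(b+c-a)+g(a)\ge g(b)+g(c)$.

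The step that would fail is the truncation in your ``main obstacle'' paragraph. Capping, $\min(h,M)$, is a concave transform and does \emph{not} preserve supermodularity of a non-decreasing $h$. Take $h(x_1,x_2)=x_1^++x_2^+$, which is $(G-0)^+$ for zero drift and the ridge cost $z^+$, and set $M=1$. On $\{0,1\}^2$ the capped values are $0,1,1,1$, so $\min(h,1)(1,1)+\min(h,1)(0,0)=1<2=\min(h,1)(1,0)+\min(h,1)(0,1)$.

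Truncate the shocks instead, setting $h_M(x)=h(x_1\wedge M,\dots,x_H\wedge M)$. This is supermodular, because coordinatewise non-decreasing reparametrizations commute with $\vee$ and $\wedge$. Since $h=(G-v)^+$ is non-negative and non-decreasing, $h_M$ is bounded, $0\le h_M\le h(M\mathbf{1})$. It also increases to $h$, with $h_M(x)=h(x)$ once $M\ge\max_s x_s$. Apply the bounded Tchen/Lorentz comparison to $h_M$, then monotone convergence under both couplings, to get the stop-loss inequality. Alternatively, simply cite the comonotonic bound for supermodular functions with finite expectations, as the paper does.

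Separately, the per-term route you dismiss actually closes here. Every coupled summand is a non-decreasing function of the single $U$, so the coupled total is the comonotonic sum of the coupled marginals. You already have per-term $\le_{\mathrm{icx}}$ dominance. Two standard facts finish it: any sum is convex-dominated by the comonotonic sum of its marginals, and comonotonic sums preserve the stop-loss order marginal by marginal. Together these give the conclusion from scalar comonotonicity results alone.
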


The proof composes supermodularity of ridge costs and their increasing convex
transforms with the extremal property of comonotonic coupling
~\citep{kaasdhaenegoovaerts2000comonotonic}; CVaR respects the resulting
increasing convex order. Together with subadditivity, this makes the exact
$\lam=1$ summed-tail objective an upper bound under the stated hypotheses.
The finite interpolant has no general sign, however, so the implemented
score is not a formal certificate without a certified envelope. The
hypotheses also matter: a concave rotation reverses the inequality
($-0.855$ versus $-0.647$), and inventory receives no certificate. Overload
has a measured coupled-minus-executed $\CVaR_{0.1}$ of $+21.4$, while the
microgrid gives $302.88$ versus $8.82$ in the same conservative direction
(Appendix~\ref{sec:appendix-setup-detail}). No rate is claimed for nonlinear
drifts; monotone-systems theory supplies a sign condition, not a rate
~\citep{angelisontag2003monotone}. Because the microgrid clips charge, its
ordering remains empirical rather than a by-construction proof.
 \section{Experimental Setup}
\label{sec:setup}

\paragraph{Environments.} Three environments with asymmetric tails carry
the claims: two closed-form and one built from real load data (full
definitions are in Appendix~\ref{sec:appendix-envs}). \emph{Overload} has
one state, $H=8$, and an action-scaled noise tail that trades throttle reward
against squared-hinge overload cost. \emph{Inventory} has $H=12$, persistent
lognormal demand ($\rho=0.8$), and stockout cost ten times holding cost.
Synthetic maps provide calibration and negative controls; a fitted CityLearn
surrogate is used only behind its validity gates.

\paragraph{Microgrid.} The third environment is battery dispatch against a
real hourly load trace. Its state is battery charge, an AR(1) standardized
log-load anomaly and hour; $H=24$ and the action is a charge/discharge
fraction in $[-1,1]$. A pooled training-client table supplies innovations;
site-specific quantities are fitted from the first $70\%$ of rows. Twelve
clients fit the pooled AR/noise model, while four retain local calibration.
Because the latter also enter the D1 design screen, they are not untouched:
sitewise evidence is conditional (Appendix~\ref{sec:appendix-gates}). The
state is ordered in the shared level through charge clipping, although
clipping lies outside Proposition~\ref{prop:exact}'s smooth condition. The
prospective endpoint $V1$ is insufficient headroom for the next full-power
discharge, not actual import above capacity; that endpoint is a separately
labelled post-hoc replay.

\paragraph{Planners and budget.} Every planner uses CEM with $10$ iterations,
$10$ elites and clipped candidates. QPP ($K=5$, $\alpha=0.2$,
$\lam\in\{0,0.5,1\}$) and Monte Carlo ($S=5$) use $P=64$; mean rollout uses
$PK=320$; moment matching is within $5\%$ of QPP's dynamics-evaluation
budget. $\Nevals$ includes final elite-mean scoring. A control tests the
different elite fractions (Appendix~\ref{sec:appendix-setup-detail});
composition studies otherwise vary only the sampler and loop refinements.
The microgrid also uses a PETS-style TS-$\infty$ bootstrap ensemble matched at
$25$ calls per candidate-step and evaluated by held-out trace replay.

Planning optimizes a $20\%$ per-step tail surrogate, whereas evaluation
reports the worst $10\%$ of realized episode costs; both choices were frozen,
and are not asserted equivalent. An exploratory audit separately tests
$\alpha=0.1$, total-trajectory CVaR and terminal alignment
(Appendix~\ref{sec:appendix-review-sensitivity}).

\paragraph{Common random numbers and statistics.} Repeat $r$ gives every
planner the same noise streams, enabling paired contrasts. Confirmatory
tables use $n=200$ ($100$ for $\lam$ and replanning sweeps); the corrected
inventory count audit uses $n=400$. Empirical $\CVaR_{0.1}$ averages the
worst $20$ realized episode costs over optimizer and execution randomness,
not fixed-plan conditional risk. Intervals use paired CRN bootstraps; mean,
CVaR and violation rows use sign-flip, stored bootstrap-tail and exact
McNemar procedures. Each site table is one Holm family. The four frozen
sitewise bounds are unadjusted and descriptive; a labelled post-hoc audit
adds Holm tests and simultaneous Bonferroni bounds. Prospective rules and
test details are in Table~\ref{tab:gates} and
Appendix~\ref{sec:appendix-gates}; the budget is model calls, not runtime or
memory.
 \section{Results}
\label{sec:results}

Conventions are those of Section~\ref{sec:setup}; $\CVaR_{0.1}$ is the
upper $10\%$ tail of the realized cost over repeats (lower is better).

\begin{figure}[t]
\centering
\includegraphics[width=\textwidth]{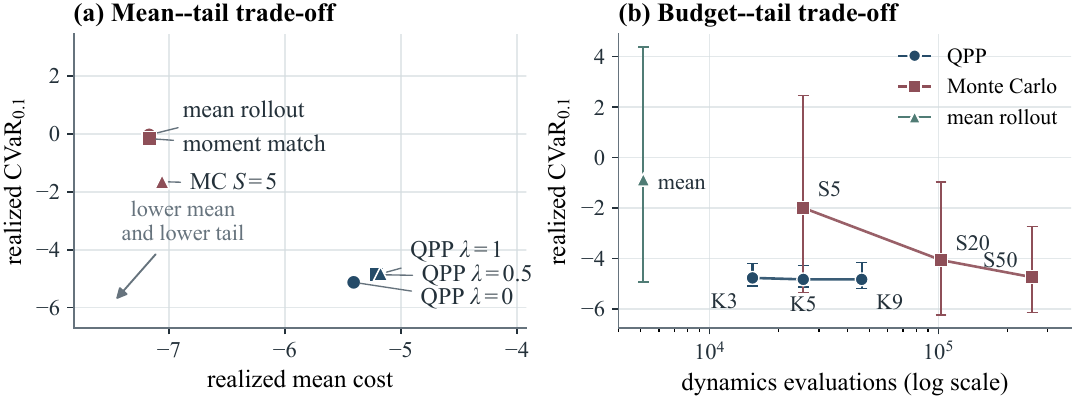}
\caption{Controlled-environment results on overload. (a) QPP moves along a
mean--tail frontier rather than dominating every metric. (b) Increasing
trajectory count improves sampled evaluation, but the low-budget QPP points
remain competitive; bars show bootstrap intervals. The appendix reports the
full statistical comparisons and inventory counterexample.}
\label{fig:controlled-results}
\end{figure}

\subsection{Tail risk against noise-blind planning}
\label{sec:results-mean}

\begin{table}[h!]
\centering
\small
\begin{tabular}{lrrrrr}
\toprule
& \multicolumn{3}{c}{$\CVaR_{0.1}$} & \multicolumn{2}{c}{Mean cost} \\
\cmidrule(lr){2-4}\cmidrule(l){5-6}
Environment & QPP & Mean & $\Delta$ & QPP & Mean \\
\midrule
Overload & -4.858 & -0.032 & -4.826 & -5.209 & -7.170 \\
Inventory & 163.139 & 270.013 & -106.874 & 141.384 & 63.146 \\
\bottomrule
\end{tabular}
 \caption{Point estimates against mean rollout; lower is better and $\Delta$
is QPP minus mean rollout. Full results are in Appendix
Tables~\ref{tab:overload-mean-full} and~\ref{tab:inventory-mean-full}.}
\label{tab:results-noise-blind}
\end{table}

Table~\ref{tab:results-noise-blind} summarizes the main trade-off. On overload
($H=8$; Figure~\ref{fig:controlled-results}a), QPP at $\lam=0.5$ lowers the
tail cost and reduces the overload rate from $2.5\%$ to $0.5\%$, but its mean
cost is worse. The frozen B1 bound is met, so the result is a mean--tail
frontier rather than overall domination. A control with the same elite
fraction as mean rollout changes the tail by less than one tenth of the QPP
gaps on overload and inventory; full statistical audits are in
Appendix~\ref{sec:appendix-setup-detail}.

On inventory ($H=12$, $\rho=0.8$), QPP records no stockout in $200$ episodes,
whereas mean rollout has a rate of $0.670$. This tail improvement comes with
a large mean-cost penalty. Moreover, $S=5$ Monte Carlo has a lower tail cost
($145.252$), and QPP with $\lam=0$ reaches $144.972$. Thus the nonzero risk
weight is not the source of the inventory benefit.

\subsection{Against budget-matched sampling}
\label{sec:results-mc}

\begin{table}[h!]
\centering
\small
\begin{tabular}{lrrrrr}
\toprule
Environment & $n$ & QPP $K=5$ & MC $S=5$ & MC $S=20$ & MC $S=50$ \\
\midrule
Overload & 200 & -4.823 & -1.989 & -4.058 & -4.735 \\
Inventory & 400 & 163.558 & 208.987 & 130.009 & 113.208 \\
\bottomrule
\end{tabular}
 \caption{Realized $\CVaR_{0.1}$ in the trajectory-count sweeps; lower is
better. Inventory reports the corrected $n=400$ audit; full statistical results and
extended sweeps are in the appendix.}
\label{tab:results-budget}
\end{table}

Table~\ref{tab:results-budget} separates point estimates from interpretation.
At the matched evaluator budget ($S=K=5$), the overload gap for QPP at
$\lam=0.5$ is $-3.210$, but the adjusted test does not support a tail-cost
difference ($p_{\mathrm{adj}}=0.6183$). The mean-cost penalty is $1.849$
($p_{\mathrm{adj}}=0.0015$). The inventory matched-budget tail comparison is
also unresolved (Table~\ref{tab:qmc-inventory}), and QPP loses the mean
comparison decisively.

The trajectory-count sweeps make the environment difference clear. On
inventory, $S=20$ Monte Carlo is already below $K=5$ QPP, so B3 is not met.
On overload, no $S\le50$ arm comes within $5\%$ of QPP; the extension to
$S=200$ is unresolved relative to $S=50$, and a clipped coupled sampler first
wins at $S=100$. The evidence therefore supports a low-budget overload
advantage, not a general advantage over sampling.

On the fitted microgrid no IID arm comes within $5\%$ of QPP through
$S=200$; IID tail cost remains $9$--$15\%$ worse and a clipped coupled
sampler leaves a $5$--$8\%$ gap. Population, elites and iterations are held
fixed, so this is not a best-allocation total-budget comparison
(Appendix~\ref{sec:appendix-microgrid-crossover}).

\subsection{A real-data tail environment}
\label{sec:results-microgrid}

\paragraph{Calibration and H1.} Four locally calibrated clients also enter
D1, so their evidence is conditional; pooled calibration passes only $3/8$
split seeds, and two planned sites exceed the PIT tolerance. At $n=200$ and
$H=24$, all planners meet the budget gates.
Figure~\ref{fig:microgrid-gates} shows improved $V1$ at all sites, B1 passing
at $242$, $253$ and $279$, and B2 failing everywhere. A post-hoc replay gives
actual-import rates of $28.5$--$37.5\%$ for QPP versus $43.0$--$62.5\%$ for
mean rollout. This supports reduced incidence, not safety
(Appendix~\ref{sec:appendix-microgrid-h1}). A Holm audit rejects the $10\%$
B1 null at three sites, but Bonferroni retains only $242$ and $279$. At
$n=1{,}000$, neither $\alpha=0.1$ nor terminal alignment helps consistently;
breaking rank sharing worsens tail cost and V1
(Appendix~\ref{sec:appendix-review-sensitivity}).

\begin{figure}[t]
\centering
\includegraphics[width=0.96\textwidth]{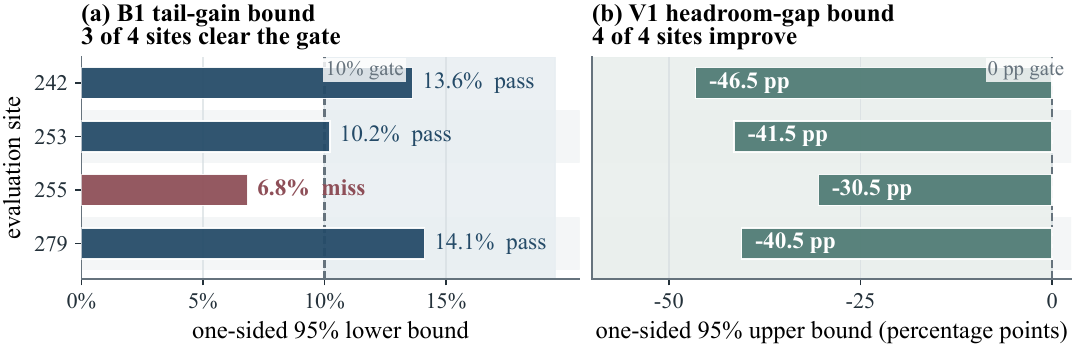}
\caption{Sitewise one-sided $95\%$ bounds for the fitted microgrid. Dashed
lines mark the gates: B1 passes at three sites, while V1 improves at all four.
Bounds are frozen and unadjusted; the multiplicity audit is in the appendix.}
\label{fig:microgrid-gates}
\end{figure}

\label{sec:results-ensemble}
\label{sec:results-composition}
\label{sec:results-lambda}
 \section{Limitations and Scope}
\label{sec:negatives}

\paragraph{Measurement.} Model-call matching omits runtime and memory;
decision-variance reduction is unsupported except through inventory
saturation. Other limits are loose coupling, $11\%$ interpolation error at
$K=5$, an unpriced terminal state, and 20 tail outcomes per $n=200$ record.

\paragraph{Selection.} D1 screens all sites; the construction fails D1 on a
second dataset~\citep{miller2020bdg2}, and KPI calibration and CityLearn
replanning fail. Proposition~\ref{prop:conservative} does not sign the finite
interpolant, and a concave rotation reverses its ordering. Evidence does not
establish untouched-site, learned-model, epistemic, fixed-plan or
decision-variance performance, nor an unconditional budget advantage.
 \section{Conclusion}
\label{sec:conclusion}

QPP integrates deterministic comonotonic disturbance levels. It is most useful
when planning budgets are tight and shared-rank tail structure is informative.
Its benefit can disappear when coupling or model fit is weak.
 
\subsection*{Generative AI Use}
Generative AI tools were used to assist with writing and editing, figure
refinement, and code review. The authors reviewed and verified all AI-assisted
content and take responsibility for the final content.\label{maintext:end}

\appendix
\section{Additional proofs and experimental details}
\label{sec:appendix}

\subsection{Environment definitions}
\label{sec:appendix-envs}

This section first defines the controlled environments and their baselines,
then gives the fitted-environment construction used in the application
studies. The final part defines the microgrid endpoints and ensemble
evaluators. This order separates the dynamics and costs from the data split
and the later evaluation rules.

\subsubsection{Controlled environments and baselines}

The \emph{overload} environment has one state and horizon $H=8$. Its
dynamics are $z'=az+gu+(s_0+s_1u)\varepsilon$, where $\varepsilon$ is
standardized lognormal noise and the throttle satisfies $u\in[0,1]$. The
pre-step cost is $-ru+c\,\mathrm{relu}(z-\theta)^2$. The constants are
$a=0.7$, $g=0.5$, $s_0=0.05$, $s_1=0.5$, $\theta=2.5$, $c=8$, $r=1$, and
$z_0=0$. The deterministic path never crosses $\theta$
for any constant throttle; the action-scaled tail does. The constants were
tuned by a $200{,}000$-sample search for that separation; this is a disclosed
design choice rather than an independently validated property.

The \emph{inventory} state is $[x,m]$, representing stock and a smoothed
demand level, and the horizon is $H=12$. The dynamics are
$x'=x+cu-m-s_Dm\varepsilon$ and
$m'=\rho m+(1-\rho)\bar m+s_mm\varepsilon$, with one shared standardized
lognormal shock per step and order fraction $u\in[0,1]$. The pre-step cost is
$h\,\mathrm{relu}(x)+p\,\mathrm{relu}(-x)+c_ou$. The constants are $h=1$,
$p=10$, $c_o=0.5$, $\rho=0.8$, $\bar m=1$, $c=3$, $s_D=0.5$, $s_m=0.1$,
and $z_0=[1,1]$. A step is a stockout when the post-step $x<0$. No
constant was tuned against a planning outcome; the mean- and tail-optimal
constant orders come from a $200{,}000$-sample grid search under the
implemented cost timing ($0.41$ and $0.53$).

The \emph{synthetic} maps include monotone diagonal dynamics (per-coordinate persistence
$a_i \sim \mathrm{Unif}(0.5, 0.9)$), dense $\tanh$, and rotation dynamics
with Gaussian, lognormal, or Student-$t_3$ standardized noise. The
monotone-diagonal family with a symmetric quadratic cost is the planning
negative control.

For moment matching, central differences estimate the full Jacobian. The
baseline propagates marginal variances while treating the covariance as
diagonal. With state mean $\mu$, marginal variance $v$, basis vector $e_j$,
and difference step $\epsilon$, the update is
\[
\mu'=f(\mu,u),\qquad
J_{ij}=\frac{f_i(\mu+\epsilon e_j,u)-f_i(\mu-\epsilon e_j,u)}{2\epsilon},
\qquad
v_i'=\sum_{j=1}^{d}J_{ij}^2v_j+\sigma_i(\mu,u)^2.
\]
It therefore uses $2d+1$ dynamics evaluations per step: each perturbed call
returns every output coordinate. The method transmits marginal uncertainty
across state coordinates but does not carry cross-covariances between steps;
it is a first-order diagonal-covariance approximation, not an exact Gaussian
filter.

\subsubsection{CityLearn surrogate}

The CityLearn surrogate is a diagonal linear state-space model whose dynamics
are monotone. Its noise table contains empirical residual quantiles. The model
is fitted to $2{,}000$ random transitions of the real simulator. Planning uses
a horizon of $H=24$ hours on the surrogate, and execution uses the real
environment. The validity gates require cost $R^2>0$ and multi-step quantile
calibration error below $0.10$.

\subsubsection{Microgrid dynamics and cost}

The microgrid has state $[\mathrm{soc},\ell,h]$, action $u\in[-1,1]$, and
horizon $H=24$. The state contains the charge fraction, a standardized AR(1)
log-load anomaly, and an unbounded hour counter. Positive $u$ charges and
negative $u$ discharges.
Writing $u_+=\max(u,0)$ and $u_-=\max(-u,0)$, the transition is
\[
\begin{split}
\mathrm{soc}'&=\operatorname{clip}\!\left(
\mathrm{soc}+\frac{\eta_c u_+p_{\mathrm{bat}}\Delta t}{e_{\mathrm{cap}}}
-\frac{u_-p_{\mathrm{bat}}\Delta t}{\eta_de_{\mathrm{cap}}},\,0,\,1\right),\\
\ell'&=\rho\ell+\sqrt{1-\rho^2}\,\varepsilon,\qquad h'=h+1 .
\end{split}
\]
Here $\eta_c=\eta_d=0.95$, $e_{\mathrm{cap}}=2.0$,
$p_{\mathrm{bat}}=0.5$, and $\Delta t=1.0$. Let
$\Delta\mathrm{soc}=\mathrm{soc}'-\mathrm{soc}$. The bus-side battery draw,
positive when charging, is
\[
B=\begin{cases}
\Delta\mathrm{soc}\,e_{\mathrm{cap}}/(\eta_c\Delta t),
&\Delta\mathrm{soc}\geq0,\\
\Delta\mathrm{soc}\,e_{\mathrm{cap}}\eta_d/\Delta t,
&\Delta\mathrm{soc}<0.
\end{cases}
\]
Normalized demand and net grid import are
\[
L_{\mathrm{pu}}=\exp(\mathrm{profile}(h)+s_r\ell),\qquad
G_{\mathrm{pu}}=L_{\mathrm{pu}}+B,
\]
where $G_{\mathrm{pu}}<0$ denotes export and $\mathrm{profile}$ is a
two-harmonic Fourier curve. If $s_{\mathrm{site}}$ is the reported site
scale, then $L_{\mathrm{physical}}=s_{\mathrm{site}}L_{\mathrm{pu}}$ and
$G_{\mathrm{physical}}=s_{\mathrm{site}}G_{\mathrm{pu}}$. Thus
$p_{\mathrm{cap}}$, $p_{\mathrm{bat}}$ and $e_{\mathrm{cap}}$ are in the
normalized system's per-unit power and per-unit-hour coordinates; the tariff
below prices per-unit energy rather than claiming a physical currency bill.
With $[x]_+=\max(x,0)$, the pre-step cost is
\[
c(z,u)=\pi(h)[G_{\mathrm{pu}}]_+-p_{\mathrm{exp}}[-G_{\mathrm{pu}}]_+
+w_o[G_{\mathrm{pu}}-p_{\mathrm{cap}}]_+^2+w_d u_{\mathrm{eff}}^2,
\]
where $u_{\mathrm{eff}}$ is the realized action after state-of-charge
saturation. The flat import tariff is $\pi=0.20$, export credit
$p_{\mathrm{exp}}=0.04$, the overload-penalty ratio is $10$, and degradation
is $0.10$ per pu-h. Thus $\sigma=[0,\sqrt{1-\rho^2},0]$ is constant and
non-negative.

\subsubsection{Microgrid data, split, and fitted quantities}

The re-dated ECL hourly mirror contains $26{,}304$ rows for $321$ clients from
2016-07-01 to 2019-07-02, matching the original release
span~\citep{trindade2015ecl}. Of these clients, $92$ are strictly positive on
every row. The reproducible seed-0 cohort contains clients 127, 130, 137, 189,
197, 202, 220, 225, 233, 240, 242, 253, 255, 257, 270, and 279. Twelve form the
pooled training set; sites 242, 253, 255, and 279 are excluded from pooled
fitting and used for evaluation.

Each site's scale, profile, $s_r$, and $p_{\mathrm{cap}}$ are fitted from its
first $70\%$ of rows; the final $30\%$ is reserved for evaluation.
$p_{\mathrm{cap}}$ is the site's empirical $0.95$ quantile of unshaved
training load. The standardized innovation table is fitted only from the
twelve training clients and is cut at the $0.1$ and $99.9$ percentiles. It
contains $220{,}932$ values, has endpoints $-4.0116$ and $+5.1985$, raw mean
$0.00009$, and standard deviation $0.99996$. The fitted persistence is
$\rho=0.7525$, with noise loading $0.6586$.

\begin{figure}[!b]
\centering
\includegraphics[width=0.98\textwidth]{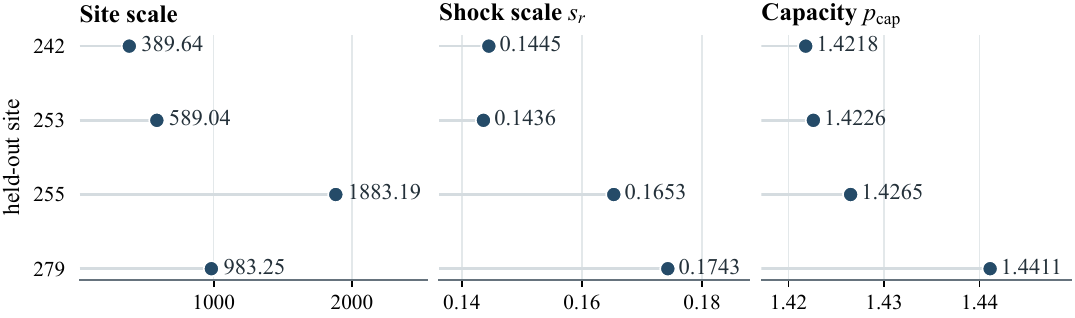}
\caption{Site-specific quantities fitted without pooling the four held-out
evaluation sites. Labels give the exact scale, innovation scale $s_r$, and
capacity threshold $p_{\mathrm{cap}}$ used by each site's simulator. The
separate axes preserve the three quantities' natural units rather than
normalizing them to a common visual scale.}
\label{fig:microgrid-site-parameters}
\end{figure}

Two disclosed design revisions preceded freezing the D1 rule: a flat tariff
replaced an originally planned two-level tariff, and the innovation table was
cut at its $0.1\%$ ends. Both revisions make the tail smaller, working against
a tail-aware planner rather than in its favor.

\subsubsection{Microgrid endpoints and ensemble evaluators}

For the primary $V1$ endpoint, define
\[
D_{\max}(s)=\min\!\left(p_{\mathrm{bat}},
\frac{s e_{\mathrm{cap}}\eta_d}{\Delta t}\right),\qquad
h(z)=L_{\mathrm{pu}}(z)-D_{\max}(\mathrm{soc})-p_{\mathrm{cap}},
\]
\[
V1=\mathbf 1\!\left\{\max_{1\leq t\leq H}h(z_t)>0\right\}.
\]
It is evaluated on every post-transition state, including $z_H$. Hence it
marks loss of discharge headroom, not necessarily realized overload under the
action already taken. The separate exploratory event is
$\mathbf 1\{\exists t\in\{0,\ldots,H-1\}:G_{\mathrm{pu}}(z_t,u_t)>
p_{\mathrm{cap}}\}$.
The exact-stream replay uses the same execution seed sequence as the recorded
cost and $V1$ observations. Rates for every planner, paired bounds and the
cross-site multiplicity policy are reported with the H1 tables below. Because
this endpoint was added after the frozen protocol, it remains an exploratory
operational diagnostic.

\paragraph{Ensemble evaluators.} Each bootstrap member resamples the
training interval in $168$-hour blocks with replacement and refits the
site models. Its pooled AR coefficient and innovation table use a
without-replacement $75\%$ subsample of the twelve training clients;
$p_{\mathrm{cap}}$ remains the reference site's fixed per-unit threshold.
For the trajectory ensemble with $M=5$ and $K=5$, the same five levels are
propagated separately through each of five members, giving an $M\times K=25$
equal-weight vertical mixture for mean and CVaR scoring. The single-model QPP
arm with $M=1$ and $K=25$ propagates 25 levels through the reference model.
The PETS-style ensemble with $M=5$ and $S=5$ draws five IID trajectories from
each of five members and keeps each particle's member fixed throughout the
horizon (TS-$\infty$); its 25 realized trajectories are scored as ordinary
equal-weight Monte Carlo samples.

\subsection{Level-curve scoring details}
\label{sec:appendix-scoring}
For $K>1$, the constant-extended trapezoidal mean is
$\sum_{k=1}^K w_k C_k$, where
\[
w_1=q_1+\frac{q_2-q_1}{2},\qquad
w_k=\frac{q_{k+1}-q_{k-1}}{2}\ (1<k<K),\qquad
w_K=1-q_K+\frac{q_K-q_{K-1}}{2}.
\]
The weights sum to one; for $K=1$, $w_1=1$. For CVaR, the implementation
brackets the threshold $v$ by the row minimum and maximum and performs exactly
$60$ bisection iterations. On a sloped cell $[q_k,q_{k+1}]$, it computes the
crossing fraction
$r=\operatorname{clip}((v-C_k)/(C_{k+1}-C_k),0,1)$ and integrates the portion
where $C(q)>v$ by the exact trapezoid area. Increasing and decreasing cells
select opposite sides of the crossing; a flat cell is included wholly or not
at all. The two constant boundary cells are handled by the same rule.

After bisection, let $m_>$ be the measure of $\{q:C(q)>v_\star\}$ and $I_>$
the integral of $C$ on that set. The returned value is
\[
\frac{I_>+v_\star(\alpha-m_>)}{\alpha},
\]
which supplies the required fraction of any atom at $v_\star$. This is why
unsorted level rows and thresholds crossing either direction are handled
without ranking the values.

\subsection{Gate statements and outcomes}
\label{sec:appendix-gates}

This section is a registry of the paper's decision rules. It first reports the
outcomes, then defines the controlled-environment, composition, and microgrid
gate families in that order. A fraction such as $3/4$ means that three of
four site- or row-specific outcomes have the displayed status; it is not a
rate computed from the underlying episodes.

\begin{table}[h]
\centering
\footnotesize
\setlength{\tabcolsep}{3pt}
\begin{tabular}{@{}p{0.49\textwidth}cccc@{}}
\toprule
Gate & Over. & Inv. & Micro. & Other \\
\midrule
1a/b/c propagation versus coupled MC & & & & Sat. \\
D1 reserve discrimination $\ge 0.15$ & & & Sat. & Not sat. 2/2 \\
$G_{E2}$ calibration (skill, PIT, support) & & & Sat. & \\
B1 tail below mean rollout & Sat. & Sat. & Sat. 3/4 & \\
V1 headroom-loss rate below mean rollout & & & Sat. 4/4 & \\
H1 tail gap versus both noise-blind baselines $\ge 2$ & Sat. & & & \\
Surrogate validity: cost $R^2>0$, calibration $<0.10$ & & & & Sat. \\
B2 decision variance $\le 1/3$ of MC & Not sat. & Sat.$^\dagger$ & Not sat. 4/4 & Not sat. \\
B2eff planned-reserve variance (exploratory) & & & Sat. 1/4 & \\
B3 fixed-search trajectory-count sweep & N/A & Not sat. & Sat. 4/4 & \\
Coupled sampler within $5\%$ by $10\times$ & Inc. & & Not sat. 4/4 & \\
G2 surrogate gain $\ge 0.03$ & & & & Not sat. \\
G2-U3 replanning gain, five-way rule & & & & Negative \\
KPI simulator calibration & & & & Not sat. \\
E1 ensemble tail below budget-matched PETS & & & Sat. 3/4 & \\
E1v ensemble safer than PETS & & & Sat. 4/4 & \\
$G_{E3}$ source admissibility & & & Sat. & \\
\bottomrule
\end{tabular}
\caption{Primary, calibration and application criteria. ``Sat.'' means
satisfied, ``Not sat.'' means not satisfied, ``Inc.'' means inconclusive,
``Negative'' denotes a discharged negative result, and ``N/A'' denotes a
criterion that was not evaluated. $^\dagger$Satisfied by action-bound saturation and not claimed
(Section~\ref{sec:negatives}). Microgrid planning gates are read per site;
$G_{E2}$ and $G_{E3}$ are pooled cross-client source gates. A fraction reports
the number of site- or row-specific outcomes with the displayed status.}
\label{tab:gates}
\end{table}

\begin{table}[h]
\centering
\footnotesize
\setlength{\tabcolsep}{3pt}
\begin{tabular}{@{}>{\raggedright\arraybackslash}p{0.38\linewidth}
  *{4}{>{\centering\arraybackslash}p{0.13\linewidth}}@{}}
\toprule
Gate or diagnostic & Over. & Inv. & Micro. & Other \\
\midrule
C1 tail below MC under each sampler (4) & Not sat. 4/4 & Not sat. 4/4 & & \\
C2 tail below mean rollout under each sampler (4) & Sat. 4/4 & Sat. 4/4 & & \\
C3 tail gap within Gaussian-row margin (4) & Sat. 2/4; Inc. 2/4 & Not sat. 2/4; Inc. 2/4 & & \\
C4 candidate-stream variance halved (8) & Sat. 3/8 & Not sat. 8/8 & & \\
\midrule
\multicolumn{5}{@{}l}{\emph{Diagnostics without a prospectively specified bar}} \\
B2$'$ both-stream decision variance versus MC & & & & Not sat. \\
H1$'$ tail and mean below mean rollout & & & & Not sat. \\
E2/E3 epistemic decomposition/premium & & & Not sat. 4/4; Not sat. 4/4 & \\
E4 shift amplifies premium & & & N/A & \\
\bottomrule
\end{tabular}
\caption{Composition gates and ungated diagnostics, using the status labels
defined in Table~\ref{tab:gates}. C1--C4 were prospectively specified; the
bottom block was not.}
\label{tab:gates-composition}
\end{table}

\FloatBarrier
\subsubsection{Controlled-environment gates}

\begin{figure}[!b]
\centering
\includegraphics[width=\textwidth]{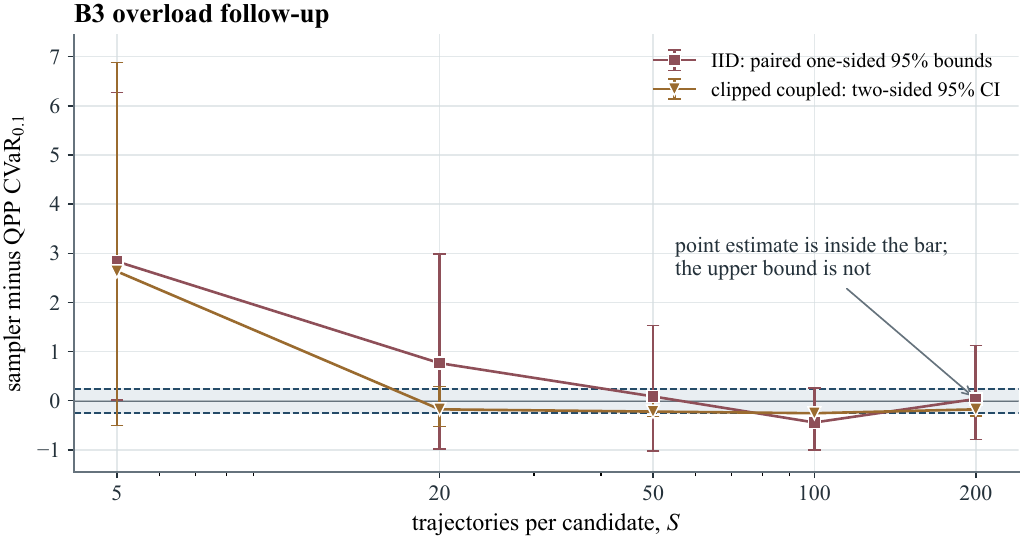}
\caption{Overload B3 trajectory-count follow-up. The curves show
sampler-minus-QPP tail-cost gaps. IID intervals join the paired one-sided
$95\%$ lower and upper bounds used by the B3 branches; clipped-coupled
intervals are two-sided $95\%$ intervals for the separate equivalence
diagnostic. The shaded region is the $\pm0.2411$ cost-unit comparison band.}
\label{fig:controlled-gate-guide}
\end{figure}

\paragraph{Accuracy and primary comparisons.}
Gates 1a--1c compare propagated quantiles with a coupled Monte Carlo
reference. B1 uses a one-sided tail-cost comparison: an absolute signed gap
on overload and a relative gain on inventory. H1 tests the overload gap
against both noise-blind baselines. G2 checks surrogate skill and calibration,
while B2 compares decision variance with budget-matched Monte Carlo.

\paragraph{Trajectory-count comparison.}
B3 asks for the smallest Monte Carlo sample count within $5\%$ of QPP at
$K=5$, with the CEM search fixed. That count must be at least $50$, or ten
times the per-candidate evaluator cost. The recorded rule labels the branch
in which no $S\leq50$ enters the band as \emph{not measured}, not met, and
the paper retains that verdict.

The overload follow-up extends the grid to $S\in\{5,20,50,100,200\}$ with
the same candidate seeds; the $S\leq50$ arms reproduce the closed record.
The absolute $5\%$ bar is $0.2411$ cost units. No IID arm qualifies because
its one-sided upper bound stays above the bar. At $S=200$, the point estimate
$0.0358$ is below the bar, but the upper bound is $1.1286$; the lower bound is
$-0.7891$, so neither decisive B3 branch applies and the recorded verdict
remains \emph{not measured}. This distinction between the estimate and the
gate bound is visible in Figure~\ref{fig:controlled-gate-guide}.

The clipped coupled sampler beats QPP at $S=100$, with a difference of
$-0.2545$ and a $95\%$ interval from $-0.3507$ to $-0.1472$
(Holm-adjusted $p=0.001$). At $S=200$, its difference is $-0.1752$ and the
interval runs from $-0.3071$ to $0.0466$, so the apparent advantage is no
longer conclusive.

\subsubsection{Surrogate and composition gates}

G2 requires a lower bound of at least $0.03$ on the relative
$\CVaR_{0.1}$ gain over mean rollout on the CityLearn surrogate. G2-U3 uses
a five-way decision at the strongest disturbance: met, discharged negative,
significant gain below the bar, inconclusive, or not measured. The KPI gate
checks simulator skill, PIT coverage, and support.

For each of four gated sampler rows, C1 and C2 test QPP against $S=5$ Monte
Carlo and mean rollout, respectively, using the B1 tail-cost rules. C3 asks
whether the row's QPP-minus-Monte-Carlo gap lies within the prespecified
margin around the Gaussian row's gap. Because that margin came from an
earlier exploratory gap rather than an application tolerance, C3 measures
internal stability, not practical equivalence. C4 requires the sampler's
candidate-stream decision variance to be at most half the Gaussian
sampler's. B2$'$ and H1$'$ are recorded diagnostics without a prospectively
specified bar.

\subsubsection{Microgrid gates}

\paragraph{Design and calibration.}
D1 is the design-time discrimination gate. For every evaluation site and
seed $0$--$5$, the constant-reserve oracle uses $200{,}000$ CRN samples
on a 41-point reserve grid with $\alpha=0.1$, $H=24$, a flat tariff, and
$p_{\mathrm{cap}}$ at $q_{0.95}$. It computes
$r_{\mathrm{cvar}}-r_{\mathrm{mean}}$ and is met only if every seed's median
over evaluation sites is at least $0.15$. One adjustment to $q_{0.90}$ was
permitted before stopping. The gate was met without that adjustment: the
six per-seed medians were $0.1625$, $0.175$, $0.175$, $0.175$, $0.1625$,
and $0.175$, and the per-seed minimum over sites was $0.15$. The planning
evidence is therefore conditional on this passed screen.

$G_{E2}$ is a pooled cross-client calibration gate. It requires median
one-step log-load skill over persistence above zero, every site's skill
above $-0.05$, pooled PIT coverage within $0.05$ of nominal at the five
levels $0.05$, $0.25$, $0.5$, $0.75$, and $0.95$, and fewer than $0.01$ of
pooled held-out innovations outside the fitted support. Per-site PIT
deviations are diagnostics rather than gate components.

\paragraph{Planning and ensemble comparisons.}
V1 tests the paired episode headroom-loss rate against mean rollout with a
one-sided upper bound below zero. B1 requires the one-sided $95\%$ bootstrap
lower bound on the relative $\CVaR_{0.1}$ gain to be at least $0.10$. B2
uses the first-action decision-variance ratio and the same $1/3$ upper bar;
B2eff applies that bar to planned-reserve variance as an exploratory check.
Both budget-match gates require the two $N_{\mathrm{eval}}$ values to be
within $(K-1)H+10^{-6}=96.000001$, and both hold at every site.

The trajectory-count gate varies only $S$: no $S\leq200$ may enter the
$5\%$ band, and the one-sided lower bound at $S=200$ must exceed $0.05$.
The coupled-sampler gate instead asks whether some $S\leq50$ is equivalent
within the same $5\%$ band. $G_{E3}$ checks mixture PIT deviation and
member-level skill and support for the 25 bootstrap-resampled models. The
replay-scope design check applies the D1 rule to the execution-scope choice.
E1 and E1v compare the quantile ensemble with PETS on tail cost and V1 using
the B1 bound. E2, E3, and E4 are recorded diagnostics for epistemic
decomposition, a material premium with ratio at least $0.01$, and
shift-amplified premium. The BDG2 check repeats D1 with the same bar,
permitted adjustment, and stopping rule; it is not met at either cap
(Section~\ref{sec:negatives}).

\subsubsection{Baseline disclosures and claim scope}

The $\beta$ sweep shares the headline's
repeats, and $\beta = 0$ is the Gaussian row, so on overload every swept
$\beta$ is weaker than Gaussian and the strongest recorded Monte Carlo
baseline there is the Gaussian control with the iCEM loop. Because
selection and evaluation reuse repeats, inference involving the selected
$\beta$ is descriptive rather than confirmatory. The iCEM-style
row highlights three departures from the published iCEM algorithm: carried
elites replace fresh draws rather than being appended (budget neutrality),
the coloured noise is normalized to unit per-position variance, and
carried elites are re-scored each iteration, which is information-free for
a deterministic evaluator. The record documents eight baseline deviations
across the complete iCEM-style and \dsCEMstyle{} sampler implementation.

Taken together, the gates support the comparisons with noise-blind planning
that are stated in the main text. They do not establish uniform superiority
over budget-matched Monte Carlo, a benefit from ensemble allocation, or
transfer through the CityLearn surrogate. The claims in the paper retain
those limits.

\subsection{Setup and measurement detail displaced from the main text}
\label{sec:appendix-setup-detail}

This section collects checks that support the experimental setup but are not
needed to read the main results. The material is grouped into search settings,
record-level robustness checks, theory diagnostics, and limitations.

\subsubsection{Search settings}

\paragraph{Elite fraction (Section~\ref{sec:setup}).} The elite fraction
differs by evaluator: $0.156$ for QPP and Monte Carlo, $0.031$ for mean
rollout, $0.093$ for moment matching.
At mean rollout's fraction QPP's tail changes by $-0.3824$ on overload and
$-18.9734$ on inventory, under a tenth of either gap, so no B1 or C2
verdict depends on this confound.

\paragraph{Sampler colour.} $\beta$ is chosen
per environment by a prospectively specified baseline-favouring rule (lowest
realized $\CVaR_{0.1}$ of that sampler's Monte Carlo cell over $\{1, 2,
3\}$): $1$ for both samplers on overload; $2$ (iCEM-style) and $1$
(\dsCEMstyle{}) on inventory.

\subsubsection{Numerical validation}

An earlier evaluator included a pool-adjacent-violators projection that could
alter negatively loaded inventory trajectories. All affected inventory
results reported in this paper were recomputed without this projection. For
the controlled overload and synthetic studies, recomputation reproduced the
reported outputs exactly. In the remaining overload and surrogate studies,
the measured projection displacement is zero on the evaluated trajectories,
so removing the operation leaves the reported outputs unchanged. Repeating
Gate 1b without the projection gives the same error, $0.0020$.

The displayed overload table uses a $10{,}000$-replicate bootstrap interval
and gives a B1 lower bound of $-0.8275$. An additional sensitivity analysis
with $200{,}000$ replicates gives a B1 overload bound of $-0.7739$
(SE $0.0106$), a B1 inventory bound of $0.2237$ (SE $0.0005$), and G2 of
$0.0124$ (SE $0.0001$). No criterion outcome changes. Because these analyses
use different resampling counts, their numerical estimates are reported
separately.

\subsubsection{Theory diagnostics}

\paragraph{The spread law, measured (Section~\ref{sec:theory}).} On the
Gaussian monotone-diagonal map the measured gap between propagated and
independent-process quantiles is $0.0048$ at $H = 1$ (where the couplings
coincide and the value is the reference's own error), then $0.6737$,
$1.0597$, $1.4694$ at $H = 3, 5, 10$; the $H = 10$ value lies inside
$[\Rspread(10, 0.5) - 1, \Rspread(10, 0.9) - 1] = [0.7304, 2.0291]$, and
simulation matches~\eqref{eq:spread} within $2\%$ on $a \in \{0.5, 0.7,
0.9\}$, $H \le 10$.

\paragraph{Level-curve interpolation error (Section~\ref{sec:theory}).}
Relative error $0.1344$ / $0.1100$ / $0.0978$ at $K = 3$ / $5$ / $9$ on the
monotone map.

\begin{figure}[h]
\centering
\includegraphics[width=0.97\textwidth]{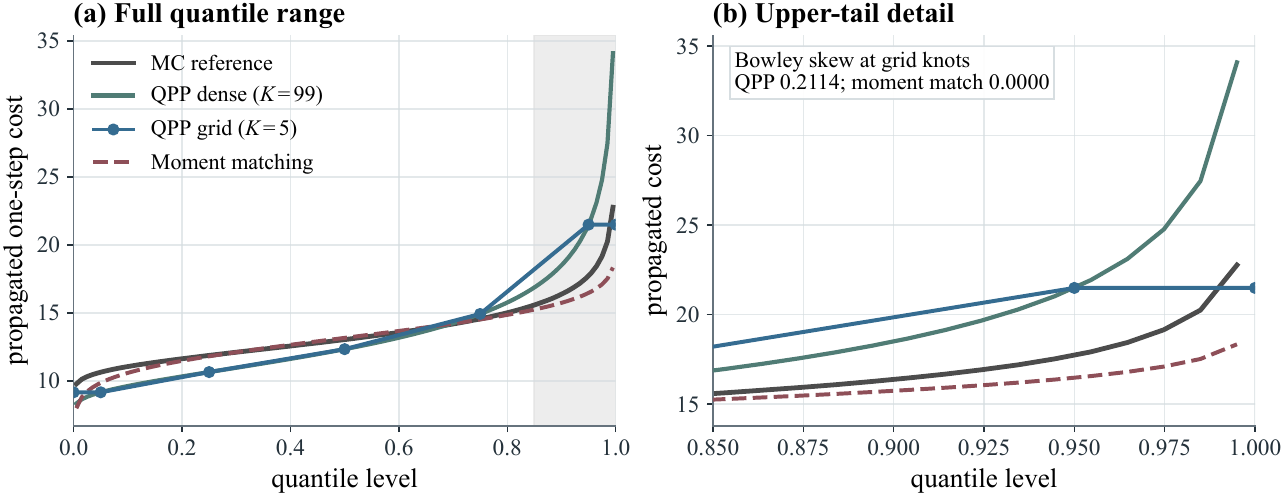}
\caption{One-step cost quantiles under standardized lognormal noise. The
deployed five-level QPP approximation is shown together with a dense-grid
reference; both retain positive skew and upper-tail curvature, whereas
Gaussian moment matching is symmetric by construction. This diagnostic
concerns distributional shape, not calibration or planning performance.}
\label{fig:skew-mechanism}
\end{figure}

\FloatBarrier
\subsubsection{Limitations and transfer checks}

\paragraph{KPI simulator (Section~\ref{sec:negatives}).} Skill $0.0122$;
PIT deviation $0.1378$; $3.39\%$ of residuals out of support.

\paragraph{Measurement limitations (Section~\ref{sec:negatives}).} The
five-level grid has only the $0.75$--$0.95$ segment and a constant extension
above $0.95$ to describe the upper $20\%$ planning tail. Costs are charged
on the pre-step state, while violations are measured after transition, so
the terminal state is unpriced and the last action can affect a violation
without incurring the corresponding state cost. Realized tail metrics mix
optimizer and execution randomness across repeats. Runtime and memory were
not matched.

\paragraph{Second-dataset diagnosis (Section~\ref{sec:negatives}).} The
unchanged microgrid construction has $113$ eligible buildings on
BDG2~\citep{miller2020bdg2}; sixteen span four campuses and four are held
out. D1 fails at both permitted caps (median gap $0.0625$--$0.075$ against
$0.15$), stopping the study before calibration or planning. Two buildings
have zero gap for opposite reasons: one anomaly scale makes overload
dominate every reserve, while the other is too mild for reserve to repay
degradation. ECL's cohort spans anomaly scale $0.1166$--$0.7962$;
persistence is $0.9441$ here against $0.7525$ on ECL, and the coupled-to-
executed comparison is approximately $1.03\times10^9$ versus $45.7$, a
factor of $2.2\times10^7$ against ECL's factor of about thirty-four. This
indicates severe surrogate looseness under the present construction. The two
datasets suggest that persistence and anomaly scale may
govern comonotonic over-dispersion, but they do not establish a cutoff or a
general population scope.

\subsection{Proofs}
\label{sec:appendix-proofs}

\paragraph{The two order-restriction operators.} Pool-adjacent violators
is the Euclidean projection onto the monotone cone: it keeps each entry's
position and replaces violating runs by their means, and it is
non-expansive because it is the metric projection onto a closed convex
cone. Rearrangement sorts the values: it keeps the multiset and permutes
it, and it is non-expansive by the Hardy--Littlewood--P\'olya rearrangement
inequality, not by convexity. They disagree in general ($(3, 1, 2) \mapsto
(2, 2, 2)$ versus $(1, 2, 3)$), so a lemma about one does not transfer to
the other.

\begin{proof}
\textbf{of Proposition~\ref{prop:exact}, Part (a).}
Induction on $t$: $z_0^{(k)} = z_0 = Z_0(q_k)$, and if $z_t^{(k)} =
Z_t(q_k)$ then~\eqref{eq:propagate} is the definition of $Z_{t+1}(q_k)$
with $F^{-1}(U)$ evaluated at $U = q_k$. Hence $C_t^{(k)} = c(Z_t(q_k),
u_t)$, and the level curve interpolates $u \mapsto c(Z_t(u), u_t)$ at the
knots.

\textbf{Part (b).}
On every compact subinterval, continuity and vanishing interior
mesh give uniform interpolation convergence. Endpoint knots tending to
$0$ and $1$, together with uniform integrability, then give $L^1$
convergence on $(0,1)$. Hence the mean converges, and
$|\CVaR_\alpha(X)-\CVaR_\alpha(Y)|\leq
\alpha^{-1}\mathbb{E}|X-Y|$ gives CVaR convergence under the common-$U$
coupling.

\textbf{Part (c).}
For the marginal-quantile statement, if
$Z_{t,i}(u)$ is non-decreasing in $u$, then
\[
\Pr[Z_{t,i}(U)<Z_{t,i}(q)]\le q
\quad\text{and}\quad
\Pr[Z_{t,i}(U)\le Z_{t,i}(q)]\ge q.
\]
Thus $Z_{t,i}(q)$ is a generalized $q$-quantile (with equality in the
second relation when the map is strictly increasing). Non-decreasing
coordinates propagate: if every coordinate of
$Z_t(u)$ is non-decreasing in $u$, then $\frac{d}{du} Z_{t+1}(u) = \big(J_f
+ F^{-1}(u) J_\sigma\big)\, \frac{d}{du} Z_t(u) + \sigma\, \frac{d}{du}
F^{-1}(u)$ has non-negative entries whenever the bracketed matrix and
$\sigma$ do, since $F^{-1}$ is non-decreasing. The base derivative is
$dZ_0/du=0$, and $dZ_1/du=\sigma(z_0,u_0)(F^{-1})'(u)\geq0$ almost
everywhere; local absolute continuity
then preserves monotonicity. The condition is required for every $u \in (0, 1)$ (non-negativity at the
$K$ propagated levels alone says nothing about the order between knots);
the diagnostic in Appendix~\ref{sec:appendix-setup-detail} checks only that the $K$
propagated trajectories are ordered along $K$, a necessary condition.
\end{proof}

\begin{proof}
\textbf{of Remark~\ref{rem:spread}.}
From $z_0 = 0$, the coupled process gives
\[
  Z_H=s\,\varepsilon\sum_{t<H}a^t
  =s\,\varepsilon\frac{1-a^H}{1-a},
  \qquad
  \mathrm{sd}(Z_H)=s\frac{1-a^H}{1-a}.
\]
For the independent process,
\[
  \widetilde Z_H=s\sum_{t<H}a^t\varepsilon_{H-1-t},
  \qquad
  \mathrm{sd}(\widetilde Z_H)
  =s\sqrt{\frac{1-a^{2H}}{1-a^2}}.
\]
Divide the two expressions and use
$1-a^{2H}=(1-a^H)(1+a^H)$ and $1-a^2=(1-a)(1+a)$.
\end{proof}

\begin{proof}
\textbf{of Proposition~\ref{prop:conservative}.}
The argument has four steps: write the total cost as a function of the shock
sequence, establish its monotonicity and supermodularity, apply the
comonotonic upper bound, and then use the stop-loss representation of CVaR.
Under the hypotheses each per-step cost is $g_t(v_t^\top Z_t + \beta_t)$
with $v_t \ge 0$ and $g_t$ convex non-decreasing, and $Z_t$ is a
non-negative affine image of the shocks, so the summed cost is
$\Phi(\varepsilon_1, \dots, \varepsilon_H) = \sum_t g_t\big(w_t^\top
\varepsilon_{1:t} + b_t\big)$ with $w_t \ge 0$. (A general convex
coordinatewise non-decreasing cost need not be supermodular, e.g.
$\max(x_1, x_2)$; the ridge form is what the proof uses.) Each summand is
non-decreasing and supermodular in $\varepsilon$ (for twice-differentiable
$g_t$, $\partial^2 / \partial \varepsilon_i \partial \varepsilon_j = g_t''
w_{t,i} w_{t,j} \ge 0$; the general case follows by approximation), and
sums preserve both properties. For any non-decreasing convex $\psi$, $\psi
\circ \Phi$ is again non-decreasing and supermodular (the cross derivative
is $\psi'' \,\partial_i \Phi\, \partial_j \Phi + \psi' \,\partial_{ij}
\Phi \ge 0$). The comonotonic coupling maximizes $\mathbb{E}[h(\varepsilon)]$
over all couplings with the same marginals for every supermodular $h$
(the comonotonic upper bound, \citealp{kaasdhaenegoovaerts2000comonotonic}),
so $\mathbb{E}[\psi(\Phi^{\mathrm{com}})] \ge \mathbb{E}[\psi(\Phi^{\mathrm{ind}})]$
for every non-decreasing convex $\psi$, i.e.\ $\Phi^{\mathrm{com}}$
dominates $\Phi^{\mathrm{ind}}$ in the increasing convex order. For every
$v$, the stop-loss function $x\mapsto(x-v)_+$ is increasing and convex,
so
\[
\mathbb{E}[(\Phi^{\mathrm{ind}}-v)_+]
\leq \mathbb{E}[(\Phi^{\mathrm{com}}-v)_+] .
\]
Applying this inequality in
$\CVaR_\alpha(X)=\inf_v\{v+\alpha^{-1}\mathbb{E}[(X-v)_+]\}$ and then
taking the infimum over $v$ gives the stated CVaR ordering. Here $\alpha$
is the upper-tail mass, matching Section~\ref{sec:method}. The
counterexample of Section~\ref{sec:theory} violates the
hypotheses twice (a rotation mixes signs in $w_t$; a concave cost breaks
supermodularity) and reverses the inequality.
\end{proof}

\subsection{Main-text tables in full, and tables moved for space}
\label{sec:appendix-tables-full}

The tables in this section expand results summarized in the main text. They
are ordered as comparisons with mean rollout, the budget-matched Monte Carlo
comparison, within-sampler composition results, the $\lam$ sweep, and the
interaction contrasts. Section~\ref{sec:appendix-composition} begins only
after this complete-table block.
Tables write a two-sided interval as $m_{-a}^{+b}$, where $m$ is the estimate
and $a,b$ are the lower and upper $95\%$ offsets. One-sided intervals use
$\infty$ on the open side; a lone superscript gives an upper offset.

\begin{sidewaystable}[p]
\centering
\footnotesize
\setlength{\tabcolsep}{3pt}
\begin{tabular}{@{}>{\raggedright\arraybackslash}p{0.30\linewidth} >{\raggedright\arraybackslash}p{0.10\linewidth} >{\raggedright\arraybackslash}p{0.07\linewidth} >{\centering\arraybackslash}p{0.15\linewidth} >{\raggedleft\arraybackslash}p{0.10\linewidth} >{\raggedleft\arraybackslash}p{0.10\linewidth} r r@{}}
\toprule
Comparison & Metric & Alt. & Estimate with 95\% CI & $p$ & $p_\mathrm{adj}$ & $n$ & $n_\mathrm{boot}$ \\
\midrule
QPP lambda=0 vs mean & mean & lower & \asymci{1.762}{\infty}{0.412} & 1 & 1 & 200 & 10000 \\
QPP lambda=0 vs mean & CVaR & lower & \asymci{-5.091}{\infty}{4.008} & 0.007199 & 0.09359 & 200 & 10000 \\
QPP lambda=0 vs mean & violation & lower & \asymci{-0.025}{\infty}{0.015} & 0.03125 & 0.3125 & 200 & 10000 \\
QPP lambda=0.5 vs mean & mean & lower & \asymci{1.961}{\infty}{0.409} & 1 & 1 & 200 & 10000 \\
QPP lambda=0.5 vs mean & CVaR & lower & \asymci{-4.826}{\infty}{3.9985} & 0.0128 & 0.1536 & 200 & 10000 \\
QPP lambda=0.5 vs mean & violation & lower & \asymci{-0.02}{\infty}{0.015} & 0.0625 & 0.5625 & 200 & 10000 \\
QPP lambda=1 vs mean & mean & lower & \asymci{1.994}{\infty}{0.408} & 1 & 1 & 200 & 10000 \\
QPP lambda=1 vs mean & CVaR & lower & \asymci{-4.796}{\infty}{4.0167} & 0.0139 & 0.1536 & 200 & 10000 \\
QPP lambda=1 vs mean & violation & lower & \asymci{-0.02}{\infty}{0.015} & 0.0625 & 0.5625 & 200 & 10000 \\
moment match vs mean & mean & two-sided & \asymci{0.00242}{0.009161}{0.007514} & 0.602 & 1 & 200 & 10000 \\
moment match vs mean & CVaR & two-sided & \asymci{-0.1287}{0.0755}{0.06637} & 0.0002 & 0.003 & 200 & 10000 \\
moment match vs mean & violation & two-sided & \asymci{0}{0}{0} & 1 & 1 & 200 & 10000 \\
MC (S=5) vs mean & mean & two-sided & \asymci{0.1114}{0.16426}{0.1113} & 0.09769 & 0.6838 & 200 & 10000 \\
MC (S=5) vs mean & CVaR & two-sided & \asymci{-1.616}{1.480}{1.0522} & 0.0002 & 0.003 & 200 & 10000 \\
MC (S=5) vs mean & violation & two-sided & \asymci{0}{0}{0} & 1 & 1 & 200 & 10000 \\
\bottomrule
\end{tabular}
 \caption{Overload environment, every row of the comparison against mean
rollout; Section~\ref{sec:results-mean} cites the $\lam = 0.5$ row. One Holm
family.}
\label{tab:overload-mean-full}
\end{sidewaystable}

\begin{sidewaystable}[p]
\centering
\footnotesize
\setlength{\tabcolsep}{3pt}
\begin{tabular}{@{}>{\raggedright\arraybackslash}p{0.30\linewidth} >{\raggedright\arraybackslash}p{0.10\linewidth} >{\raggedright\arraybackslash}p{0.07\linewidth} >{\centering\arraybackslash}p{0.15\linewidth} >{\raggedleft\arraybackslash}p{0.10\linewidth} >{\raggedleft\arraybackslash}p{0.10\linewidth} r r@{}}
\toprule
Comparison & Metric & Alt. & Estimate with 95\% CI & $p$ & $p_\mathrm{adj}$ & $n$ & $n_\mathrm{boot}$ \\
\midrule
QPP lambda=0 vs mean & mean & lower & \asymci{59.68}{\infty}{11.33} & 1 & 1 & 200 & 10000 \\
QPP lambda=0 vs mean & CVaR & lower & \asymci{-125}{\infty}{59.3} & $9.999\times10^{-5}$ & 0.0009999 & 200 & 10000 \\
QPP lambda=0 vs mean & violation & two-sided & \asymci{-0.66}{0.065}{0.065} & $3.673\times10^{-40}$ & $4.775\times10^{-39}$ & 200 & 10000 \\
QPP lambda=0.5 vs mean & mean & lower & \asymci{78.24}{\infty}{11.12} & 1 & 1 & 200 & 10000 \\
QPP lambda=0.5 vs mean & CVaR & lower & \asymci{-106.9}{\infty}{59.84} & 0.0002 & 0.0014 & 200 & 10000 \\
QPP lambda=0.5 vs mean & violation & two-sided & \asymci{-0.67}{0.065}{0.065} & $9.184\times10^{-41}$ & $1.378\times10^{-39}$ & 200 & 10000 \\
QPP lambda=1 vs mean & mean & lower & \asymci{78.82}{\infty}{11.33} & 1 & 1 & 200 & 10000 \\
QPP lambda=1 vs mean & CVaR & lower & \asymci{-106.2}{\infty}{59.53} & 0.0002 & 0.0014 & 200 & 10000 \\
QPP lambda=1 vs mean & violation & two-sided & \asymci{-0.67}{0.065}{0.065} & $9.184\times10^{-41}$ & $1.378\times10^{-39}$ & 200 & 10000 \\
moment match vs mean & mean & two-sided & \asymci{-10.06}{2.39}{2.276} & $9.999\times10^{-5}$ & 0.0009999 & 200 & 10000 \\
moment match vs mean & CVaR & two-sided & \asymci{-39.25}{6.31}{5.96} & 0.0002 & 0.0014 & 200 & 10000 \\
moment match vs mean & violation & two-sided & \asymci{-0.1}{0.045}{0.04} & $1.097\times10^{-5}$ & 0.0001206 & 200 & 10000 \\
MC (S=5) vs mean & mean & two-sided & \asymci{-23.47}{7.07}{6.82} & $9.999\times10^{-5}$ & 0.0009999 & 200 & 10000 \\
MC (S=5) vs mean & CVaR & two-sided & \asymci{-124.8}{12.3}{19.2} & 0.0002 & 0.0014 & 200 & 10000 \\
MC (S=5) vs mean & violation & two-sided & \asymci{-0.335}{0.065}{0.065} & $1.355\times10^{-20}$ & $1.626\times10^{-19}$ & 200 & 10000 \\
\bottomrule
\end{tabular}
 \caption{Inventory environment, every row of the comparison against mean
rollout; Section~\ref{sec:results-mean} cites the $\lam = 0.5$ row. One Holm
family.}
\label{tab:inventory-mean-full}
\end{sidewaystable}

\begin{sidewaystable}[p]
\centering
\footnotesize
\setlength{\tabcolsep}{3pt}
\begin{tabular}{@{}>{\raggedright\arraybackslash}p{0.30\linewidth} >{\raggedright\arraybackslash}p{0.10\linewidth} >{\raggedright\arraybackslash}p{0.07\linewidth} >{\centering\arraybackslash}p{0.15\linewidth} >{\raggedleft\arraybackslash}p{0.10\linewidth} >{\raggedleft\arraybackslash}p{0.10\linewidth} r r@{}}
\toprule
Comparison & Metric & Alt. & Estimate with 95\% CI & $p$ & $p_\mathrm{adj}$ & $n$ & $n_\mathrm{boot}$ \\
\midrule
QPP lambda=0 vs MC (S=5) & mean & two-sided & \asymci{1.651}{0.487}{0.355} & $9.999\times10^{-5}$ & 0.0015 & 200 & 10000 \\
QPP lambda=0 vs MC (S=5) & CVaR & two-sided & \asymci{-3.474}{4.708}{3.47839} & 0.05099 & 0.5099 & 200 & 10000 \\
QPP lambda=0 vs MC (S=5) & violation & two-sided & \asymci{-0.025}{0.025}{0.020} & 0.0625 & 0.5625 & 200 & 10000 \\
QPP lambda=0.5 vs MC (S=5) & mean & two-sided & \asymci{1.849}{0.486}{0.355} & $9.999\times10^{-5}$ & 0.0015 & 200 & 10000 \\
QPP lambda=0.5 vs MC (S=5) & CVaR & two-sided & \asymci{-3.21}{4.531}{3.4608} & 0.08059 & 0.6183 & 200 & 10000 \\
QPP lambda=0.5 vs MC (S=5) & violation & two-sided & \asymci{-0.02}{0.02}{0.015} & 0.125 & 0.6183 & 200 & 10000 \\
QPP lambda=1 vs MC (S=5) & mean & two-sided & \asymci{1.882}{0.476}{0.360} & $9.999\times10^{-5}$ & 0.0015 & 200 & 10000 \\
QPP lambda=1 vs MC (S=5) & CVaR & two-sided & \asymci{-3.179}{4.484}{3.4674} & 0.08819 & 0.6183 & 200 & 10000 \\
QPP lambda=1 vs MC (S=5) & violation & two-sided & \asymci{-0.02}{0.02}{0.015} & 0.125 & 0.6183 & 200 & 10000 \\
mean vs MC (S=5) & mean & two-sided & \asymci{-0.1114}{0.1132}{0.16136} & 0.09829 & 0.6183 & 200 & 10000 \\
mean vs MC (S=5) & CVaR & two-sided & \asymci{1.616}{1.0424}{1.461} & 0.0002 & 0.0024 & 200 & 10000 \\
mean vs MC (S=5) & violation & two-sided & \asymci{0}{0}{0} & 1 & 1 & 200 & 10000 \\
moment match vs MC (S=5) & mean & two-sided & \asymci{-0.109}{0.1044}{0.15509} & 0.07729 & 0.6183 & 200 & 10000 \\
moment match vs MC (S=5) & CVaR & two-sided & \asymci{1.488}{0.9868}{1.408} & 0.0002 & 0.0024 & 200 & 10000 \\
moment match vs MC (S=5) & violation & two-sided & \asymci{0}{0}{0} & 1 & 1 & 200 & 10000 \\
\bottomrule
\end{tabular}
 \caption{Overload environment: every planner against $S = 5$ Monte Carlo at
matched $\Nevals$, two-sided. One Holm family.}
\label{tab:overload-mc}
\end{sidewaystable}

\begin{sidewaystable}[p]
\centering
\footnotesize
\setlength{\tabcolsep}{3pt}
\begin{tabular}{@{}>{\raggedright\arraybackslash}p{0.30\linewidth} >{\raggedright\arraybackslash}p{0.10\linewidth} >{\raggedright\arraybackslash}p{0.07\linewidth} >{\centering\arraybackslash}p{0.15\linewidth} >{\raggedleft\arraybackslash}p{0.10\linewidth} >{\raggedleft\arraybackslash}p{0.10\linewidth} r r@{}}
\toprule
Comparison & Metric & Alt. & Estimate with 95\% CI & $p$ & $p_\mathrm{adj}$ & $n$ & $n_\mathrm{boot}$ \\
\midrule
Gaussian candidates with QPP vs Gaussian candidates with MC & CVaR & lower & \asymci{-3.132}{\infty}{3.138725} & 0.05119 & 0.3367 & 200 & 10000 \\
Gaussian candidates with QPP vs Gaussian candidates with mean rollout & CVaR & lower & \asymci{-4.87}{\infty}{4.0134} & 0.0147 & 0.2058 & 200 & 10000 \\
iCEM-style candidates with QPP vs iCEM-style candidates with MC & CVaR & lower & \asymci{-3.392}{\infty}{3.389419} & 0.05 & 0.3367 & 200 & 10000 \\
iCEM-style candidates with QPP vs iCEM-style candidates with mean rollout & CVaR & lower & \asymci{-4.702}{\infty}{4.0436} & 0.0187 & 0.2067 & 200 & 10000 \\
dsCEM-style candidates with QPP vs dsCEM-style candidates with MC & CVaR & lower & \asymci{-3.426}{\infty}{3.51262} & 0.05739 & 0.3367 & 200 & 10000 \\
dsCEM-style candidates with QPP vs dsCEM-style candidates with mean rollout & CVaR & lower & \asymci{-4.7}{\infty}{4.0909} & 0.0194 & 0.2067 & 200 & 10000 \\
Gaussian candidates, iCEM loop with QPP vs Gaussian candidates, iCEM loop with MC & CVaR & lower & \asymci{-2.368}{\infty}{2.7465} & 0.09619 & 0.3367 & 200 & 10000 \\
Gaussian candidates, iCEM loop with QPP vs Gaussian candidates, iCEM loop with mean rollout & CVaR & lower & \asymci{-4.783}{\infty}{4.0475} & 0.0159 & 0.2067 & 200 & 10000 \\
Gaussian candidates, dsCEM loop with QPP vs Gaussian candidates, dsCEM loop with MC & CVaR & lower & \asymci{-2.403}{\infty}{2.6368} & 0.07969 & 0.3367 & 200 & 10000 \\
Gaussian candidates, dsCEM loop with QPP vs Gaussian candidates, dsCEM loop with mean rollout & CVaR & lower & \asymci{-4.768}{\infty}{4.0468} & 0.0172 & 0.2067 & 200 & 10000 \\
Gaussian candidates, carryover only with QPP vs Gaussian candidates, carryover only with MC & CVaR & lower & \asymci{-2.835}{\infty}{2.81064} & 0.0481 & 0.3367 & 200 & 10000 \\
Gaussian candidates, carryover only with QPP vs Gaussian candidates, carryover only with mean rollout & CVaR & lower & \asymci{-4.788}{\infty}{4.0771} & 0.0159 & 0.2067 & 200 & 10000 \\
Gaussian candidates, schedule only with QPP vs Gaussian candidates, schedule only with MC & CVaR & lower & \asymci{-2.054}{\infty}{2.6209} & 0.1468 & 0.3367 & 200 & 10000 \\
Gaussian candidates, schedule only with QPP vs Gaussian candidates, schedule only with mean rollout & CVaR & lower & \asymci{-4.788}{\infty}{4.0626} & 0.0171 & 0.2067 & 200 & 10000 \\
\bottomrule
\end{tabular}
 \caption{Composition study, overload: within each sampler row, QPP against
$S = 5$ Monte Carlo and against mean rollout on absolute $\CVaR_{0.1}$
(one-sided lower alternative). Rows: Gaussian, iCEM-style, \dsCEMstyle{}, and the
four Gaussian loop controls. One Holm family.}
\label{tab:within-overload}
\end{sidewaystable}

\begin{sidewaystable}[p]
\centering
\footnotesize
\setlength{\tabcolsep}{3pt}
\begin{tabular}{@{}>{\raggedright\arraybackslash}p{0.30\linewidth} >{\raggedright\arraybackslash}p{0.10\linewidth} >{\raggedright\arraybackslash}p{0.07\linewidth} >{\centering\arraybackslash}p{0.15\linewidth} >{\raggedleft\arraybackslash}p{0.10\linewidth} >{\raggedleft\arraybackslash}p{0.10\linewidth} r r@{}}
\toprule
Comparison & Metric & Alt. & Estimate with 95\% CI & $p$ & $p_\mathrm{adj}$ & $n$ & $n_\mathrm{boot}$ \\
\midrule
Gaussian candidates with QPP vs Gaussian candidates with MC & relative $\CVaR_{0.1}$ gain & upper & \asymci{-0.1231}{0.5967}{\infty} & 0.7315 & 1 & 200 & 10000 \\
Gaussian candidates with QPP vs Gaussian candidates with mean rollout & relative $\CVaR_{0.1}$ gain & upper & \asymci{0.3958}{0.1718}{\infty} & 0.0002 & 0.0028 & 200 & 10000 \\
iCEM-style candidates with QPP vs iCEM-style candidates with MC & relative $\CVaR_{0.1}$ gain & upper & \asymci{-0.4792}{0.9628}{\infty} & 0.9434 & 1 & 200 & 10000 \\
iCEM-style candidates with QPP vs iCEM-style candidates with mean rollout & relative $\CVaR_{0.1}$ gain & upper & \asymci{0.3508}{0.1844}{\infty} & 0.0011 & 0.008799 & 200 & 10000 \\
dsCEM-style candidates with QPP vs dsCEM-style candidates with MC & relative $\CVaR_{0.1}$ gain & upper & \asymci{-0.137}{0.6009}{\infty} & 0.7491 & 1 & 200 & 10000 \\
dsCEM-style candidates with QPP vs dsCEM-style candidates with mean rollout & relative $\CVaR_{0.1}$ gain & upper & \asymci{0.3624}{0.1814}{\infty} & 0.0008999 & 0.008099 & 200 & 10000 \\
Gaussian candidates, iCEM loop with QPP vs Gaussian candidates, iCEM loop with MC & relative $\CVaR_{0.1}$ gain & upper & \asymci{-0.3928}{1.0072}{\infty} & 0.8945 & 1 & 200 & 10000 \\
Gaussian candidates, iCEM loop with QPP vs Gaussian candidates, iCEM loop with mean rollout & relative $\CVaR_{0.1}$ gain & upper & \asymci{0.3842}{0.1742}{\infty} & 0.0002 & 0.0028 & 200 & 10000 \\
Gaussian candidates, dsCEM loop with QPP vs Gaussian candidates, dsCEM loop with MC & relative $\CVaR_{0.1}$ gain & upper & \asymci{-0.1154}{0.5706}{\infty} & 0.7291 & 1 & 200 & 10000 \\
Gaussian candidates, dsCEM loop with QPP vs Gaussian candidates, dsCEM loop with mean rollout & relative $\CVaR_{0.1}$ gain & upper & \asymci{0.4047}{0.1675}{\infty} & 0.0002 & 0.0028 & 200 & 10000 \\
Gaussian candidates, carryover only with QPP vs Gaussian candidates, carryover only with MC & relative $\CVaR_{0.1}$ gain & upper & \asymci{-0.1415}{0.5717}{\infty} & 0.758 & 1 & 200 & 10000 \\
Gaussian candidates, carryover only with QPP vs Gaussian candidates, carryover only with mean rollout & relative $\CVaR_{0.1}$ gain & upper & \asymci{0.3922}{0.1749}{\infty} & 0.0002 & 0.0028 & 200 & 10000 \\
Gaussian candidates, schedule only with QPP vs Gaussian candidates, schedule only with MC & relative $\CVaR_{0.1}$ gain & upper & \asymci{-0.3992}{0.8368}{\infty} & 0.9313 & 1 & 200 & 10000 \\
Gaussian candidates, schedule only with QPP vs Gaussian candidates, schedule only with mean rollout & relative $\CVaR_{0.1}$ gain & upper & \asymci{0.3898}{0.1735}{\infty} & 0.0002 & 0.0028 & 200 & 10000 \\
\bottomrule
\end{tabular}
 \caption{Composition study, inventory: the same within-row comparisons on
the relative $\CVaR_{0.1}$ gain (one-sided upper alternative). One Holm family.}
\label{tab:within-inventory}
\end{sidewaystable}

\begin{sidewaystable}[p]
\centering
\footnotesize
\setlength{\tabcolsep}{3pt}
\begin{tabular}{@{}>{\raggedright\arraybackslash}p{0.30\linewidth} >{\raggedright\arraybackslash}p{0.10\linewidth} >{\raggedright\arraybackslash}p{0.07\linewidth} >{\centering\arraybackslash}p{0.15\linewidth} >{\raggedleft\arraybackslash}p{0.10\linewidth} >{\raggedleft\arraybackslash}p{0.10\linewidth} r r@{}}
\toprule
Comparison & Metric & Alt. & Estimate with 95\% CI & $p$ & $p_\mathrm{adj}$ & $n$ & $n_\mathrm{boot}$ \\
\midrule
lam=0.25 vs lam=0.00 & mean & two-sided & \asymci{0.001852}{0.0008716}{0.000802} & 0.0002 & 0.0016 & 100 & 10000 \\
lam=0.25 vs lam=0.00 & CVaR & two-sided & \asymci{-0.003421}{0.002766}{0.00344513} & 0.05159 & 0.1032 & 100 & 10000 \\
lam=0.50 vs lam=0.00 & mean & two-sided & \asymci{0.002868}{0.000960}{0.000895} & 0.0002 & 0.0016 & 100 & 10000 \\
lam=0.50 vs lam=0.00 & CVaR & two-sided & \asymci{-0.003208}{0.002866}{0.0038272} & 0.1116 & 0.1116 & 100 & 10000 \\
lam=0.75 vs lam=0.00 & mean & two-sided & \asymci{-0.01655}{0.00157}{0.00151} & 0.0002 & 0.0016 & 100 & 10000 \\
lam=0.75 vs lam=0.00 & CVaR & two-sided & \asymci{-0.02855}{0.00394}{0.00635} & 0.0002 & 0.0016 & 100 & 10000 \\
lam=1.00 vs lam=0.00 & mean & two-sided & \asymci{-0.01669}{0.00157}{0.00150} & 0.0002 & 0.0016 & 100 & 10000 \\
lam=1.00 vs lam=0.00 & CVaR & two-sided & \asymci{-0.02866}{0.00390}{0.00633} & 0.0002 & 0.0016 & 100 & 10000 \\
\bottomrule
\end{tabular}
 \caption{Synthetic environment: realized mean and $\CVaR_{0.1}$ of QPP at
$\lam \in \{0.25, 0.5, 0.75, 1\}$ against $\lam = 0$, two-sided,
$n = 100$. One Holm family.}
\label{tab:lam-sweep}
\end{sidewaystable}

\begin{sidewaystable}[p]
\centering
\footnotesize
\setlength{\tabcolsep}{3pt}
\begin{tabular}{@{}>{\raggedright\arraybackslash}p{0.30\linewidth} >{\raggedright\arraybackslash}p{0.10\linewidth} >{\raggedright\arraybackslash}p{0.07\linewidth} >{\centering\arraybackslash}p{0.15\linewidth} >{\raggedleft\arraybackslash}p{0.10\linewidth} >{\raggedleft\arraybackslash}p{0.10\linewidth} r r@{}}
\toprule
Comparison & Metric & Alt. & Estimate with 95\% CI & $p$ & $p_\mathrm{adj}$ & $n$ & $n_\mathrm{boot}$ \\
\midrule
iCEM-style candidates with QPP minus MC vs Gaussian candidates with QPP minus MC & CVaR & two-sided & \asymci{-0.2609}{0.6341}{0.6304} & 0.4766 & 1 & 200 & 10000 \\
dsCEM-style candidates with QPP minus MC vs Gaussian candidates with QPP minus MC & CVaR & two-sided & \asymci{-0.2948}{1.0102}{1.1001} & 0.6283 & 1 & 200 & 10000 \\
Gaussian candidates, iCEM loop with QPP minus MC vs Gaussian candidates with QPP minus MC & CVaR & two-sided & \asymci{0.7636}{0.69229}{0.7754} & 0.0274 & 0.274 & 200 & 10000 \\
Gaussian candidates, dsCEM loop with QPP minus MC vs Gaussian candidates with QPP minus MC & CVaR & two-sided & \asymci{0.7287}{0.67308}{0.8193} & 0.03 & 0.274 & 200 & 10000 \\
Gaussian candidates, carryover only with QPP minus MC vs Gaussian candidates with QPP minus MC & CVaR & two-sided & \asymci{0.2961}{0.6100}{0.8179} & 0.4236 & 1 & 200 & 10000 \\
Gaussian candidates, schedule only with QPP minus MC vs Gaussian candidates with QPP minus MC & CVaR & two-sided & \asymci{1.077}{0.7264}{0.935} & 0.0009999 & 0.011 & 200 & 10000 \\
\bottomrule
\end{tabular}
 \caption{Composition study, overload: each row's QPP-minus-Monte-Carlo
$\CVaR_{0.1}$ gap minus the Gaussian row's gap (two-sided; zero means the
sampler or loop leaves QPP's advantage unchanged). Main-text rows of the
prospectively specified family; QPP-minus-mean rows in the appendix.}
\label{tab:interaction-overload}
\end{sidewaystable}

\begin{sidewaystable}[p]
\centering
\footnotesize
\setlength{\tabcolsep}{3pt}
\begin{tabular}{@{}>{\raggedright\arraybackslash}p{0.30\linewidth} >{\raggedright\arraybackslash}p{0.10\linewidth} >{\raggedright\arraybackslash}p{0.07\linewidth} >{\centering\arraybackslash}p{0.15\linewidth} >{\raggedleft\arraybackslash}p{0.10\linewidth} >{\raggedleft\arraybackslash}p{0.10\linewidth} r r@{}}
\toprule
Comparison & Metric & Alt. & Estimate with 95\% CI & $p$ & $p_\mathrm{adj}$ & $n$ & $n_\mathrm{boot}$ \\
\midrule
iCEM-style candidates with QPP minus MC vs Gaussian candidates with QPP minus MC & CVaR & two-sided & \asymci{37.95}{11.59}{9.04} & 0.0002 & 0.0024 & 200 & 10000 \\
dsCEM-style candidates with QPP minus MC vs Gaussian candidates with QPP minus MC & CVaR & two-sided & \asymci{2.485}{11.743}{11.285} & 0.6637 & 1 & 200 & 10000 \\
Gaussian candidates, iCEM loop with QPP minus MC vs Gaussian candidates with QPP minus MC & CVaR & two-sided & \asymci{28.36}{15.13}{13.00} & 0.0005999 & 0.0048 & 200 & 10000 \\
Gaussian candidates, dsCEM loop with QPP minus MC vs Gaussian candidates with QPP minus MC & CVaR & two-sided & \asymci{-1.363}{7.915}{8.151} & 0.7321 & 1 & 200 & 10000 \\
Gaussian candidates, carryover only with QPP minus MC vs Gaussian candidates with QPP minus MC & CVaR & two-sided & \asymci{2.426}{9.821}{10.254} & 0.6643 & 1 & 200 & 10000 \\
Gaussian candidates, schedule only with QPP minus MC vs Gaussian candidates with QPP minus MC & CVaR & two-sided & \asymci{28.62}{13.23}{11.55} & 0.0002 & 0.0024 & 200 & 10000 \\
\bottomrule
\end{tabular}
 \caption{Composition study, inventory: interaction contrasts as in
Table~\ref{tab:interaction-overload}.}
\label{tab:interaction-inventory}
\end{sidewaystable}

\FloatBarrier
\subsection{Composition study: appendix tables}
\label{sec:appendix-composition}

Across the four prospectively gated sampler rows in each of two
environments, QPP is below mean rollout on tail cost in $8/8$ comparisons
(C2) and is not below Monte Carlo in $8/8$ (C1). C3 is satisfied or
inconclusive on overload and not satisfied or inconclusive on inventory. On both closed-form
environments, $\lam=0$ has lower realized mean and $\CVaR_{0.1}$ point
estimates than $\lam=0.5$; because no direct paired family was specified,
the evidence supports the evaluator but not a nonzero-$\lam$ benefit. On the
symmetric control, $\lam$ moves the decision only above $0.5$ and by little
(Table~\ref{tab:lam-sweep}).

\paragraph{Candidates versus loop.} The sampler-only contrasts (each
strong row minus its own loop control, so that only the candidate
distribution differs; Tables~\ref{tab:interaction-overload}
and~\ref{tab:interaction-inventory}) are null on inventory for both
samplers ($9.59$ and $3.85$, intervals covering zero) and on overload for
\dsCEMstyle{}; the coloured iCEM-style candidates move the overload
comparison in QPP's favour ($-1.02$, $p_{\mathrm{adj}} = 0.0056$). The
iCEM loop moves the comparison in Monte Carlo's favour on both
environments (Gaussian control with the iCEM loop: $28.4$ on inventory,
$p_{\mathrm{adj}} = 0.0048$; $0.764$ on overload), the \dsCEMstyle{} loop
on neither. Table~\ref{tab:loop-decomposition} attributes the loop effect
to the decaying population schedule on both environments: the
schedule-only control reproduces the iCEM-loop interaction ($28.6$ and
$1.08$, both Holm-significant; iCEM loop minus schedule alone null on
both), carryover alone (in its budget-neutral, re-scored form) does not
(null on both; iCEM loop minus carryover $25.9$ and $0.468$, both
significant), and the additivity residual is not rejected under the
headline decision rule. This is a measured attribution, not a mechanism
proof: under the best-of-final-elites rule the direct loop-minus-carryover
contrasts lose significance and the overload additivity residual is
rejected ($0.799$, $p_{\mathrm{adj}} = 0.0174$); only the schedule-only
interaction rows keep theirs. (The recorded elite fraction under the
schedule rises from $0.07$ to $0.5$ across iterations, the candidate
explanation.)

\paragraph{Per environment.} On overload the strong samplers reproduce the
Gaussian tail gap within the equivalence margin (C3 satisfied); the C1 gates
are not satisfied, the iCEM-style one by a deciding bound of $0.0076$ against $0$
with standard error $0.0071$ (a boundary outcome read in neither
direction), and QPP pays $1.80$ to $1.99$ of realized mean against both
baselines in every row (Table~\ref{tab:within-mean-overload}): the
overload tail claim and this penalty are one statement. On inventory there
is no QPP-versus-Monte-Carlo win to test (C1 not satisfied in all four gated
rows; mean penalty $98.4$ to $113$ against Monte Carlo in every row), the
iCEM-style loop widens the gap further (interaction $37.95$), and C3 is
not satisfied for the two iCEM-loop rows and inconclusive for the two
\dsCEMstyle{} rows.

\begin{sidewaystable}[p]
\centering
\footnotesize
\setlength{\tabcolsep}{3pt}
\begin{tabular}{@{}>{\raggedright\arraybackslash}p{0.30\linewidth} >{\raggedright\arraybackslash}p{0.10\linewidth} >{\raggedright\arraybackslash}p{0.07\linewidth} >{\centering\arraybackslash}p{0.15\linewidth} >{\raggedleft\arraybackslash}p{0.10\linewidth} >{\raggedleft\arraybackslash}p{0.10\linewidth} r r@{}}
\toprule
Comparison & Metric & Alt. & Estimate with 95\% CI & $p$ & $p_\mathrm{adj}$ & $n$ & $n_\mathrm{boot}$ \\
\midrule
Gaussian candidates with QPP vs Gaussian candidates with MC & mean & two-sided & \asymci{1.812}{0.509}{0.360} & 0.0002 & 0.0028 & 200 & 10000 \\
Gaussian candidates with QPP vs Gaussian candidates with mean rollout & mean & two-sided & \asymci{1.965}{0.643}{0.464} & 0.0002 & 0.0028 & 200 & 10000 \\
iCEM-style candidates with QPP vs iCEM-style candidates with MC & mean & two-sided & \asymci{1.945}{0.548}{0.384} & 0.0002 & 0.0028 & 200 & 10000 \\
iCEM-style candidates with QPP vs iCEM-style candidates with mean rollout & mean & two-sided & \asymci{1.987}{0.627}{0.461} & 0.0002 & 0.0028 & 200 & 10000 \\
dsCEM-style candidates with QPP vs dsCEM-style candidates with MC & mean & two-sided & \asymci{1.95}{0.568}{0.401} & 0.0002 & 0.0028 & 200 & 10000 \\
dsCEM-style candidates with QPP vs dsCEM-style candidates with mean rollout & mean & two-sided & \asymci{1.985}{0.651}{0.465} & 0.0002 & 0.0028 & 200 & 10000 \\
Gaussian candidates, iCEM loop with QPP vs Gaussian candidates, iCEM loop with MC & mean & two-sided & \asymci{1.837}{0.446}{0.314} & 0.0002 & 0.0028 & 200 & 10000 \\
Gaussian candidates, iCEM loop with QPP vs Gaussian candidates, iCEM loop with mean rollout & mean & two-sided & \asymci{1.967}{0.635}{0.464} & 0.0002 & 0.0028 & 200 & 10000 \\
Gaussian candidates, dsCEM loop with QPP vs Gaussian candidates, dsCEM loop with MC & mean & two-sided & \asymci{1.799}{0.454}{0.304} & 0.0002 & 0.0028 & 200 & 10000 \\
Gaussian candidates, dsCEM loop with QPP vs Gaussian candidates, dsCEM loop with mean rollout & mean & two-sided & \asymci{1.972}{0.643}{0.471} & 0.0002 & 0.0028 & 200 & 10000 \\
Gaussian candidates, carryover only with QPP vs Gaussian candidates, carryover only with MC & mean & two-sided & \asymci{1.869}{0.448}{0.331} & 0.0002 & 0.0028 & 200 & 10000 \\
Gaussian candidates, carryover only with QPP vs Gaussian candidates, carryover only with mean rollout & mean & two-sided & \asymci{1.965}{0.621}{0.469} & 0.0002 & 0.0028 & 200 & 10000 \\
Gaussian candidates, schedule only with QPP vs Gaussian candidates, schedule only with MC & mean & two-sided & \asymci{1.838}{0.453}{0.293} & 0.0002 & 0.0028 & 200 & 10000 \\
Gaussian candidates, schedule only with QPP vs Gaussian candidates, schedule only with mean rollout & mean & two-sided & \asymci{1.979}{0.644}{0.471} & 0.0002 & 0.0028 & 200 & 10000 \\
\bottomrule
\end{tabular}
 \caption{Composition study, overload: within each sampler row, QPP against
$S = 5$ Monte Carlo and against mean rollout on the realized mean
(two-sided). QPP pays a mean penalty in every row; this table is the
companion of Table~\ref{tab:within-overload}. One Holm family.}
\label{tab:within-mean-overload}
\end{sidewaystable}

\begin{sidewaystable}[p]
\centering
\footnotesize
\setlength{\tabcolsep}{3pt}
\begin{tabular}{@{}>{\raggedright\arraybackslash}p{0.30\linewidth} >{\raggedright\arraybackslash}p{0.10\linewidth} >{\raggedright\arraybackslash}p{0.07\linewidth} >{\centering\arraybackslash}p{0.15\linewidth} >{\raggedleft\arraybackslash}p{0.10\linewidth} >{\raggedleft\arraybackslash}p{0.10\linewidth} r r@{}}
\toprule
Comparison & Metric & Alt. & Estimate with 95\% CI & $p$ & $p_\mathrm{adj}$ & $n$ & $n_\mathrm{boot}$ \\
\midrule
Gaussian candidates, iCEM loop with QPP minus MC vs Gaussian candidates, carryover only with QPP minus MC & CVaR & two-sided & \asymci{0.4676}{0.3323}{0.3373} & 0.0042 & 0.0126 & 200 & 10000 \\
Gaussian candidates, iCEM loop with QPP minus MC vs Gaussian candidates, schedule only with QPP minus MC & CVaR & two-sided & \asymci{-0.3138}{0.8712}{0.6641} & 0.402 & 0.402 & 200 & 10000 \\
Gaussian candidates, carryover only+Gaussian candidates, schedule only-Gaussian candidates, iCEM loop with QPP minus MC (additivity) & CVaR & two-sided & \asymci{0.6099}{0.59506}{0.7981} & 0.044 & 0.08799 & 200 & 10000 \\
\bottomrule
\end{tabular}
 \vspace{0.5em}
\begin{tabular}{@{}>{\raggedright\arraybackslash}p{0.30\linewidth} >{\raggedright\arraybackslash}p{0.10\linewidth} >{\raggedright\arraybackslash}p{0.07\linewidth} >{\centering\arraybackslash}p{0.15\linewidth} >{\raggedleft\arraybackslash}p{0.10\linewidth} >{\raggedleft\arraybackslash}p{0.10\linewidth} r r@{}}
\toprule
Comparison & Metric & Alt. & Estimate with 95\% CI & $p$ & $p_\mathrm{adj}$ & $n$ & $n_\mathrm{boot}$ \\
\midrule
Gaussian candidates, iCEM loop with QPP minus MC vs Gaussian candidates, carryover only with QPP minus MC & CVaR & two-sided & \asymci{25.93}{14.65}{12.38} & 0.0009999 & 0.003 & 200 & 10000 \\
Gaussian candidates, iCEM loop with QPP minus MC vs Gaussian candidates, schedule only with QPP minus MC & CVaR & two-sided & \asymci{-0.2614}{17.1286}{14.5414} & 0.9991 & 1 & 200 & 10000 \\
Gaussian candidates, carryover only+Gaussian candidates, schedule only-Gaussian candidates, iCEM loop with QPP minus MC (additivity) & CVaR & two-sided & \asymci{2.687}{15.477}{18.813} & 0.7943 & 1 & 200 & 10000 \\
\bottomrule
\end{tabular}
 \caption{Loop decomposition on the QPP-minus-Monte-Carlo interaction scale
(overload above, inventory below): the iCEM loop control minus the
carryover-only control, minus the schedule-only control, and the additivity
residual (carryover plus schedule minus the iCEM loop, each referenced to
the Gaussian row). Exploratory, one Holm family per environment.}
\label{tab:loop-decomposition}
\end{sidewaystable}

\FloatBarrier
\subsection{Exploratory two-by-two and CityLearn surrogate tables}
\label{sec:appendix-qmc-citylearn}

This section collects diagnostics that are intentionally outside the frozen
primary gate families. The first two tables vary the candidate sampler and
evaluator in a controlled $2\times2$ design; the third records the simulator-
surrogate negative result.

\begin{sidewaystable}[p]
\centering
\footnotesize
\setlength{\tabcolsep}{3pt}

\begin{tabular}{@{}>{\raggedright\arraybackslash}p{0.30\linewidth} >{\raggedright\arraybackslash}p{0.10\linewidth} >{\raggedright\arraybackslash}p{0.07\linewidth} >{\centering\arraybackslash}p{0.15\linewidth} >{\raggedleft\arraybackslash}p{0.10\linewidth} >{\raggedleft\arraybackslash}p{0.10\linewidth} r r@{}}
\toprule
Comparison & Metric & Alt. & Estimate with 95\% CI & $p$ & $p_\mathrm{adj}$ & $n$ & $n_\mathrm{boot}$ \\
\midrule
interaction & mean & two-sided & \asymci{0.01482}{0.09470}{0.08788} & 0.7079 & -- & 200 & 10000 \\
interaction & CVaR non-additivity & two-sided & \asymci{-0.3687}{0.9353}{0.8690} & 0.4344 & -- & 200 & 10000 \\
\bottomrule
\end{tabular}
\vspace{0.6em}

\begin{tabular}{@{}>{\raggedright\arraybackslash}p{0.30\linewidth} >{\raggedright\arraybackslash}p{0.10\linewidth} >{\raggedright\arraybackslash}p{0.07\linewidth} >{\centering\arraybackslash}p{0.15\linewidth} >{\raggedleft\arraybackslash}p{0.10\linewidth} >{\raggedleft\arraybackslash}p{0.10\linewidth} r r@{}}
\toprule
Comparison & Metric & Alt. & Estimate with 95\% CI & $p$ & $p_\mathrm{adj}$ & $n$ & $n_\mathrm{boot}$ \\
\midrule
Gaussian candidates with QPP vs Gaussian candidates with MC & mean & two-sided & \asymci{1.851}{0.460}{0.339} & $9.999\times10^{-5}$ & 0.0005999 & 200 & 10000 \\
Gaussian candidates with QPP vs Gaussian candidates with MC & CVaR & two-sided & \asymci{-2.872}{4.619}{3.2842} & 0.1188 & 0.4752 & 200 & 10000 \\
QMC candidates with QPP vs Gaussian candidates with MC & mean & two-sided & \asymci{1.845}{0.468}{0.333} & $9.999\times10^{-5}$ & 0.0005999 & 200 & 10000 \\
QMC candidates with QPP vs Gaussian candidates with MC & CVaR & two-sided & \asymci{-2.731}{4.458}{3.2206} & 0.1244 & 0.4752 & 200 & 10000 \\
QMC candidates with MC vs Gaussian candidates with MC & mean & two-sided & \asymci{-0.02092}{0.06523}{0.08891} & 0.5697 & 0.5697 & 200 & 10000 \\
QMC candidates with MC vs Gaussian candidates with MC & CVaR & two-sided & \asymci{0.5093}{0.6306}{0.8677} & 0.1728 & 0.4752 & 200 & 10000 \\
\bottomrule
\end{tabular}
\vspace{0.6em}

 \caption{Exploratory $2 \times 2$ study on overload: quasi-Monte-Carlo
against Gaussian candidates crossed with QPP against $S = 5$ Monte Carlo,
interaction contrasts and comparisons against the Gaussian Monte Carlo
cell. The overload C3 margin of the composition study is half this table's
Gaussian QPP-minus-Monte-Carlo $\CVaR_{0.1}$ gap.}
\label{tab:qmc}
\end{sidewaystable}

\begin{sidewaystable}[p]
\centering
\footnotesize
\setlength{\tabcolsep}{3pt}

\begin{tabular}{@{}>{\raggedright\arraybackslash}p{0.30\linewidth} >{\raggedright\arraybackslash}p{0.10\linewidth} >{\raggedright\arraybackslash}p{0.07\linewidth} >{\centering\arraybackslash}p{0.15\linewidth} >{\raggedleft\arraybackslash}p{0.10\linewidth} >{\raggedleft\arraybackslash}p{0.10\linewidth} r r@{}}
\toprule
Comparison & Metric & Alt. & Estimate with 95\% CI & $p$ & $p_\mathrm{adj}$ & $n$ & $n_\mathrm{boot}$ \\
\midrule
interaction & mean & two-sided & \asymci{2.083}{1.6884}{1.543} & 0.016 & -- & 200 & 10000 \\
interaction & CVaR non-additivity & two-sided & \asymci{-1.73}{15.20}{11.168} & 0.6833 & -- & 200 & 10000 \\
\bottomrule
\end{tabular}
\vspace{0.6em}

\begin{tabular}{@{}>{\raggedright\arraybackslash}p{0.30\linewidth} >{\raggedright\arraybackslash}p{0.10\linewidth} >{\raggedright\arraybackslash}p{0.07\linewidth} >{\centering\arraybackslash}p{0.15\linewidth} >{\raggedleft\arraybackslash}p{0.10\linewidth} >{\raggedleft\arraybackslash}p{0.10\linewidth} r r@{}}
\toprule
Comparison & Metric & Alt. & Estimate with 95\% CI & $p$ & $p_\mathrm{adj}$ & $n$ & $n_\mathrm{boot}$ \\
\midrule
Gaussian candidates with QPP vs Gaussian candidates with MC & mean & two-sided & \asymci{102}{9.78}{7.8} & $9.999\times10^{-5}$ & 0.0005999 & 200 & 10000 \\
Gaussian candidates with QPP vs Gaussian candidates with MC & CVaR & two-sided & \asymci{16.97}{72.53}{56.41} & 0.5471 & 1 & 200 & 10000 \\
QMC candidates with QPP vs Gaussian candidates with MC & mean & two-sided & \asymci{104.9}{10.05}{7.8} & $9.999\times10^{-5}$ & 0.0005999 & 200 & 10000 \\
QMC candidates with QPP vs Gaussian candidates with MC & CVaR & two-sided & \asymci{19.49}{74.86}{56.99} & 0.4978 & 1 & 200 & 10000 \\
QMC candidates with MC vs Gaussian candidates with MC & mean & two-sided & \asymci{0.8081}{1.5400}{1.6529} & 0.3249 & 1 & 200 & 10000 \\
QMC candidates with MC vs Gaussian candidates with MC & CVaR & two-sided & \asymci{4.246}{10.989}{15.114} & 0.4206 & 1 & 200 & 10000 \\
\bottomrule
\end{tabular}
\vspace{0.6em}

 \caption{Exploratory $2 \times 2$ study on inventory: the same layout as
Table~\ref{tab:qmc}. The inventory C3 margin of the composition study is half
this table's Gaussian QPP-minus-Monte-Carlo $\CVaR_{0.1}$ gap.}
\label{tab:qmc-inventory}
\end{sidewaystable}

\begin{sidewaystable}[p]
\centering
\footnotesize
\setlength{\tabcolsep}{3pt}

\begin{tabular}{@{}>{\raggedright\arraybackslash}p{0.30\linewidth} >{\raggedright\arraybackslash}p{0.10\linewidth} >{\raggedright\arraybackslash}p{0.07\linewidth} >{\centering\arraybackslash}p{0.15\linewidth} >{\raggedleft\arraybackslash}p{0.10\linewidth} >{\raggedleft\arraybackslash}p{0.10\linewidth} r r@{}}
\toprule
Comparison & Metric & Alt. & Estimate with 95\% CI & $p$ & $p_\mathrm{adj}$ & $n$ & $n_\mathrm{boot}$ \\
\midrule
QPP lambda=0.5 vs MC (S=5) & mean & two-sided & \asymci{-1.616}{0.175}{0.175} & $9.999\times10^{-5}$ & 0.0007999 & 200 & 10000 \\
QPP lambda=0.5 vs MC (S=5) & CVaR & two-sided & \asymci{-2.822}{0.526}{0.558} & 0.0002 & 0.0009999 & 200 & 10000 \\
mean vs MC (S=5) & mean & two-sided & \asymci{-1.471}{0.162}{0.160} & $9.999\times10^{-5}$ & 0.0007999 & 200 & 10000 \\
mean vs MC (S=5) & CVaR & two-sided & \asymci{-2.614}{0.511}{0.513} & 0.0002 & 0.0009999 & 200 & 10000 \\
\bottomrule
\end{tabular}
\vspace{0.6em}

\begin{tabular}{@{}>{\raggedright\arraybackslash}p{0.30\linewidth} >{\raggedright\arraybackslash}p{0.10\linewidth} >{\raggedright\arraybackslash}p{0.07\linewidth} >{\centering\arraybackslash}p{0.15\linewidth} >{\raggedleft\arraybackslash}p{0.10\linewidth} >{\raggedleft\arraybackslash}p{0.10\linewidth} r r@{}}
\toprule
Comparison & Metric & Alt. & Estimate with 95\% CI & $p$ & $p_\mathrm{adj}$ & $n$ & $n_\mathrm{boot}$ \\
\midrule
QPP lambda=0.5 vs mean & mean & two-sided & \asymci{-0.1452}{0.0679}{0.0670} & 0.0002 & 0.0009999 & 200 & 10000 \\
QPP lambda=0.5 vs mean & CVaR & two-sided & \asymci{-0.2079}{0.1470}{0.15916} & 0.0104 & 0.0104 & 200 & 10000 \\
MC (S=5) vs mean & mean & two-sided & \asymci{1.471}{0.160}{0.162} & $9.999\times10^{-5}$ & 0.0007999 & 200 & 10000 \\
MC (S=5) vs mean & CVaR & two-sided & \asymci{2.614}{0.513}{0.511} & 0.0002 & 0.0009999 & 200 & 10000 \\
\bottomrule
\end{tabular}
\vspace{0.6em}

 \caption{CityLearn surrogate (plan on a fitted diagonal-monotone linear
quantile surrogate, execute on the real environment): QPP at $\lam = 0.5$,
mean rollout and $S = 5$ Monte Carlo at matched $\Nevals$, $n = 200$. One
Holm family over both blocks.}
\label{tab:citylearn}
\end{sidewaystable}

\FloatBarrier
\subsection{Microgrid: H1 in full, and versus budget-matched sampling}
\label{sec:appendix-microgrid-h1}

This section keeps three comparisons separate: QPP against mean rollout, the
exploratory operational and multiplicity audits, and QPP against
budget-matched Monte Carlo. Each block is completed before the next begins.

\subsubsection{Comparison with mean rollout}

\paragraph{Against a constant-reserve oracle.} QPP over-reserves against
the CVaR-optimal constant reserve by about a factor of two (planned reserve
$0.67$--$0.78$ across the four sites against $0.275$--$0.375$), the coupling's
conservatism at $H = 24$, yet its realized CVaR beats the best member of that
one-parameter family ($6.882$ against $7.5361$ at site $242$) because the free
planner shapes the whole $24$-step profile. The oracle is a restricted-family
benchmark, not an upper bound, and its $200{,}000$-sample estimate is not the
same estimator as a $200$-repeat realized CVaR.

\begin{sidewaystable}[p]
\centering
\footnotesize
\setlength{\tabcolsep}{3pt}
\begin{tabular}{@{}>{\raggedright\arraybackslash}p{0.30\linewidth} >{\raggedright\arraybackslash}p{0.10\linewidth} >{\raggedright\arraybackslash}p{0.07\linewidth} >{\centering\arraybackslash}p{0.15\linewidth} >{\raggedleft\arraybackslash}p{0.10\linewidth} >{\raggedleft\arraybackslash}p{0.10\linewidth} r r@{}}
\toprule
Comparison & Metric & Alt. & Estimate with 95\% CI & $p$ & $p_\mathrm{adj}$ & $n$ & $n_\mathrm{boot}$ \\
\midrule
QPP lambda=0 vs mean & mean & two-sided & \asymci{-0.06466}{0.10444}{0.08902} & 0.201 & 1 & 200 & 10000 \\
QPP lambda=0 vs mean & CVaR & lower & \asymci{-1.598}{\infty}{0.569} & $9.999\times10^{-5}$ & 0.0012 & 200 & 10000 \\
QPP lambda=0 vs mean & V1 headroom loss & lower & \asymci{-0.52}{\infty}{0.06} & $4.93\times10^{-32}$ & $6.409\times10^{-31}$ & 200 & 10000 \\
QPP lambda=0.5 vs mean & mean & two-sided & \asymci{-0.05032}{0.11038}{0.09205} & 0.3521 & 1 & 200 & 10000 \\
QPP lambda=0.5 vs mean & CVaR & lower & \asymci{-1.597}{\infty}{0.592} & $9.999\times10^{-5}$ & 0.0012 & 200 & 10000 \\
QPP lambda=0.5 vs mean & V1 headroom loss & lower & \asymci{-0.525}{\infty}{0.060} & $2.465\times10^{-32}$ & $3.698\times10^{-31}$ & 200 & 10000 \\
QPP lambda=1 vs mean & mean & two-sided & \asymci{-0.04491}{0.11529}{0.09347} & 0.4293 & 1 & 200 & 10000 \\
QPP lambda=1 vs mean & CVaR & lower & \asymci{-1.619}{\infty}{0.618} & $9.999\times10^{-5}$ & 0.0012 & 200 & 10000 \\
QPP lambda=1 vs mean & V1 headroom loss & lower & \asymci{-0.525}{\infty}{0.060} & $2.465\times10^{-32}$ & $3.698\times10^{-31}$ & 200 & 10000 \\
moment match vs mean & mean & two-sided & \asymci{0.001425}{0.0009397}{0.001119} & 0.0016 & 0.0128 & 200 & 10000 \\
moment match vs mean & CVaR & two-sided & \asymci{0.003675}{0.006220}{0.007105} & 0.2602 & 1 & 200 & 10000 \\
moment match vs mean & V1 headroom loss & two-sided & \asymci{0}{0}{0} & 1 & 1 & 200 & 10000 \\
MC (S=5) vs mean & mean & two-sided & \asymci{0.04449}{0.01873}{0.01586} & $9.999\times10^{-5}$ & 0.0012 & 200 & 10000 \\
MC (S=5) vs mean & CVaR & two-sided & \asymci{-0.07031}{0.13759}{0.11055} & 0.26 & 1 & 200 & 10000 \\
MC (S=5) vs mean & V1 headroom loss & two-sided & \asymci{-0.03}{0.025}{0.02} & 0.03125 & 0.2188 & 200 & 10000 \\
\bottomrule
\end{tabular}
 \caption{Microgrid H1, every row of the comparison against mean rollout,
evaluation site $242$ (sites $253$, $255$, $279$ follow): every $\lam$
value and moment matching, alongside the $\lam = 0.5$ row cited in
Section~\ref{sec:results-microgrid}. One Holm family per site.}
\label{tab:microgrid-h1-full-242}
\end{sidewaystable}

\begin{sidewaystable}[p]
\centering
\footnotesize
\setlength{\tabcolsep}{3pt}
\begin{tabular}{@{}>{\raggedright\arraybackslash}p{0.30\linewidth} >{\raggedright\arraybackslash}p{0.10\linewidth} >{\raggedright\arraybackslash}p{0.07\linewidth} >{\centering\arraybackslash}p{0.15\linewidth} >{\raggedleft\arraybackslash}p{0.10\linewidth} >{\raggedleft\arraybackslash}p{0.10\linewidth} r r@{}}
\toprule
Comparison & Metric & Alt. & Estimate with 95\% CI & $p$ & $p_\mathrm{adj}$ & $n$ & $n_\mathrm{boot}$ \\
\midrule
QPP lambda=0 vs mean & mean & two-sided & \asymci{0.02203}{0.08263}{0.06923} & 0.5827 & 1 & 200 & 10000 \\
QPP lambda=0 vs mean & CVaR & lower & \asymci{-1.175}{\infty}{0.4754} & $9.999\times10^{-5}$ & 0.0012 & 200 & 10000 \\
QPP lambda=0 vs mean & V1 headroom loss & lower & \asymci{-0.48}{\infty}{0.06} & $1.262\times10^{-29}$ & $1.893\times10^{-28}$ & 200 & 10000 \\
QPP lambda=0.5 vs mean & mean & two-sided & \asymci{0.03054}{0.08701}{0.07226} & 0.4824 & 1 & 200 & 10000 \\
QPP lambda=0.5 vs mean & CVaR & lower & \asymci{-1.184}{\infty}{0.4911} & $9.999\times10^{-5}$ & 0.0012 & 200 & 10000 \\
QPP lambda=0.5 vs mean & V1 headroom loss & lower & \asymci{-0.475}{\infty}{0.060} & $2.524\times10^{-29}$ & $3.534\times10^{-28}$ & 200 & 10000 \\
QPP lambda=1 vs mean & mean & two-sided & \asymci{0.03795}{0.08723}{0.07075} & 0.3681 & 1 & 200 & 10000 \\
QPP lambda=1 vs mean & CVaR & lower & \asymci{-1.163}{\infty}{0.4825} & $9.999\times10^{-5}$ & 0.0012 & 200 & 10000 \\
QPP lambda=1 vs mean & V1 headroom loss & lower & \asymci{-0.475}{\infty}{0.060} & $2.524\times10^{-29}$ & $3.534\times10^{-28}$ & 200 & 10000 \\
moment match vs mean & mean & two-sided & \asymci{0.0008595}{0.0004430}{0.0005905} & $9.999\times10^{-5}$ & 0.0012 & 200 & 10000 \\
moment match vs mean & CVaR & two-sided & \asymci{0.001043}{0.00109285}{0.001530} & 0.06539 & 0.4578 & 200 & 10000 \\
moment match vs mean & V1 headroom loss & two-sided & \asymci{0}{0}{0} & 1 & 1 & 200 & 10000 \\
MC (S=5) vs mean & mean & two-sided & \asymci{0.03847}{0.01316}{0.01167} & $9.999\times10^{-5}$ & 0.0012 & 200 & 10000 \\
MC (S=5) vs mean & CVaR & two-sided & \asymci{-0.07157}{0.09903}{0.09560} & 0.168 & 1 & 200 & 10000 \\
MC (S=5) vs mean & V1 headroom loss & two-sided & \asymci{-0.01}{0.015}{0.01} & 0.5 & 1 & 200 & 10000 \\
\bottomrule
\end{tabular}
 \caption{Microgrid H1, evaluation site $253$. One Holm family per site.}
\label{tab:microgrid-h1-full-253}
\end{sidewaystable}

\begin{sidewaystable}[p]
\centering
\footnotesize
\setlength{\tabcolsep}{3pt}
\begin{tabular}{@{}>{\raggedright\arraybackslash}p{0.30\linewidth} >{\raggedright\arraybackslash}p{0.10\linewidth} >{\raggedright\arraybackslash}p{0.07\linewidth} >{\centering\arraybackslash}p{0.15\linewidth} >{\raggedleft\arraybackslash}p{0.10\linewidth} >{\raggedleft\arraybackslash}p{0.10\linewidth} r r@{}}
\toprule
Comparison & Metric & Alt. & Estimate with 95\% CI & $p$ & $p_\mathrm{adj}$ & $n$ & $n_\mathrm{boot}$ \\
\midrule
QPP lambda=0 vs mean & mean & two-sided & \asymci{0.08332}{0.075724}{0.06128} & 0.0108 & 0.06419 & 200 & 10000 \\
QPP lambda=0 vs mean & CVaR & lower & \asymci{-0.901}{\infty}{0.4547} & $9.999\times10^{-5}$ & 0.0012 & 200 & 10000 \\
QPP lambda=0 vs mean & V1 headroom loss & lower & \asymci{-0.36}{\infty}{0.055} & $2.118\times10^{-22}$ & $3.176\times10^{-21}$ & 200 & 10000 \\
QPP lambda=0.5 vs mean & mean & two-sided & \asymci{0.09059}{0.08030}{0.06461} & 0.0107 & 0.06419 & 200 & 10000 \\
QPP lambda=0.5 vs mean & CVaR & lower & \asymci{-0.9333}{\infty}{0.4788} & $9.999\times10^{-5}$ & 0.0012 & 200 & 10000 \\
QPP lambda=0.5 vs mean & V1 headroom loss & lower & \asymci{-0.36}{\infty}{0.055} & $2.118\times10^{-22}$ & $3.176\times10^{-21}$ & 200 & 10000 \\
QPP lambda=1 vs mean & mean & two-sided & \asymci{0.09761}{0.07749}{0.06199} & 0.003 & 0.021 & 200 & 10000 \\
QPP lambda=1 vs mean & CVaR & lower & \asymci{-0.9025}{\infty}{0.4579} & $9.999\times10^{-5}$ & 0.0012 & 200 & 10000 \\
QPP lambda=1 vs mean & V1 headroom loss & lower & \asymci{-0.355}{\infty}{0.055} & $4.235\times10^{-22}$ & $5.506\times10^{-21}$ & 200 & 10000 \\
moment match vs mean & mean & two-sided & \asymci{0.0006958}{0.0005657}{0.0010522} & $9.999\times10^{-5}$ & 0.0012 & 200 & 10000 \\
moment match vs mean & CVaR & two-sided & \asymci{0.004711}{0.004711}{0.009339} & 0.08279 & 0.3312 & 200 & 10000 \\
moment match vs mean & V1 headroom loss & two-sided & \asymci{0}{0}{0} & 1 & 1 & 200 & 10000 \\
MC (S=5) vs mean & mean & two-sided & \asymci{0.04491}{0.01019}{0.01139} & $9.999\times10^{-5}$ & 0.0012 & 200 & 10000 \\
MC (S=5) vs mean & CVaR & two-sided & \asymci{0.07551}{0.084803}{0.08359} & 0.08699 & 0.3312 & 200 & 10000 \\
MC (S=5) vs mean & V1 headroom loss & two-sided & \asymci{-0.01}{0.015}{0.01} & 0.5 & 1 & 200 & 10000 \\
\bottomrule
\end{tabular}
 \caption{Microgrid H1, evaluation site $255$. One Holm family per site.}
\label{tab:microgrid-h1-full-255}
\end{sidewaystable}

\begin{sidewaystable}[p]
\centering
\footnotesize
\setlength{\tabcolsep}{3pt}
\begin{tabular}{@{}>{\raggedright\arraybackslash}p{0.30\linewidth} >{\raggedright\arraybackslash}p{0.10\linewidth} >{\raggedright\arraybackslash}p{0.07\linewidth} >{\centering\arraybackslash}p{0.15\linewidth} >{\raggedleft\arraybackslash}p{0.10\linewidth} >{\raggedleft\arraybackslash}p{0.10\linewidth} r r@{}}
\toprule
Comparison & Metric & Alt. & Estimate with 95\% CI & $p$ & $p_\mathrm{adj}$ & $n$ & $n_\mathrm{boot}$ \\
\midrule
QPP lambda=0 vs mean & mean & two-sided & \asymci{-0.1099}{0.1239}{0.104454} & 0.05839 & 0.3504 & 200 & 10000 \\
QPP lambda=0 vs mean & CVaR & lower & \asymci{-1.874}{\infty}{0.691} & $9.999\times10^{-5}$ & 0.0012 & 200 & 10000 \\
QPP lambda=0 vs mean & V1 headroom loss & lower & \asymci{-0.47}{\infty}{0.055} & $5.049\times10^{-29}$ & $7.573\times10^{-28}$ & 200 & 10000 \\
QPP lambda=0.5 vs mean & mean & two-sided & \asymci{-0.09605}{0.12485}{0.104545} & 0.1005 & 0.5024 & 200 & 10000 \\
QPP lambda=0.5 vs mean & CVaR & lower & \asymci{-1.828}{\infty}{0.667} & $9.999\times10^{-5}$ & 0.0012 & 200 & 10000 \\
QPP lambda=0.5 vs mean & V1 headroom loss & lower & \asymci{-0.465}{\infty}{0.060} & $1.01\times10^{-28}$ & $1.414\times10^{-27}$ & 200 & 10000 \\
QPP lambda=1 vs mean & mean & two-sided & \asymci{-0.09015}{0.12385}{0.10282} & 0.1251 & 0.5024 & 200 & 10000 \\
QPP lambda=1 vs mean & CVaR & lower & \asymci{-1.8}{\infty}{0.66} & $9.999\times10^{-5}$ & 0.0012 & 200 & 10000 \\
QPP lambda=1 vs mean & V1 headroom loss & lower & \asymci{-0.465}{\infty}{0.060} & $1.01\times10^{-28}$ & $1.414\times10^{-27}$ & 200 & 10000 \\
moment match vs mean & mean & two-sided & \asymci{0.0003742}{0.000383772}{0.0003506} & 0.0369 & 0.2583 & 200 & 10000 \\
moment match vs mean & CVaR & two-sided & \asymci{-0.001084}{0.002650}{0.002120} & 0.6551 & 1 & 200 & 10000 \\
moment match vs mean & V1 headroom loss & two-sided & \asymci{0}{0}{0} & 1 & 1 & 200 & 10000 \\
MC (S=5) vs mean & mean & two-sided & \asymci{0.04156}{0.03134}{0.02484} & 0.0016 & 0.0144 & 200 & 10000 \\
MC (S=5) vs mean & CVaR & two-sided & \asymci{-0.1726}{0.2794}{0.22801} & 0.1656 & 0.5024 & 200 & 10000 \\
MC (S=5) vs mean & V1 headroom loss & two-sided & \asymci{-0.05}{0.03}{0.025} & 0.001953 & 0.01562 & 200 & 10000 \\
\bottomrule
\end{tabular}
 \caption{Microgrid H1, evaluation site $279$. One Holm family per site.}
\label{tab:microgrid-h1-full-279}
\end{sidewaystable}

\FloatBarrier
\subsubsection{Operational endpoint and multiplicity audit}

\paragraph{Actual import on the recorded execution stream.} The following
summary reports episode-level actual-import exceedance percentages with exact
binomial $95\%$ intervals for all six planners. The final comparison reports
QPP at $\lam=0.5$ minus mean rollout in percentage points together with its
one-sided $95\%$ upper bound. The corresponding $p$-values are adjusted by
Holm's procedure across the four sitewise contrasts. The endpoint remains
exploratory, but its replay now uses the identical execution seed sequence as
the recorded cost and $V1$ observations.
\begin{sidewaystable}[p]
\centering
\footnotesize
\setlength{\tabcolsep}{3pt}
\begin{tabular}{@{}>{\centering\arraybackslash}p{0.04\linewidth} *{6}{>{\centering\arraybackslash}p{0.095\linewidth}} >{\centering\arraybackslash}p{0.18\linewidth} >{\raggedleft\arraybackslash}p{0.08\linewidth}@{}}
\toprule
Site & QPP $\lambda=0$ & QPP $\lambda=0.5$ & QPP $\lambda=1$ & Mean & MM & MC $S=5$ & QPP$_{0.5}$--Mean with upper bound & $p_{\mathrm{Holm}}$ \\
\midrule
242 & \asymci{33.0}{6.5}{7.0} & \asymci{33.5}{6.5}{7.0} & \asymci{34.5}{6.6}{7.0} & \asymci{62.5}{7.1}{6.7} & \asymci{62.5}{7.1}{6.7} & \asymci{65.5}{7.0}{6.6} & \upperci{-29.0}{5.0} & 0.0004 \\
253 & \asymci{28.5}{6.1}{6.8} & \asymci{29.0}{6.2}{6.8} & \asymci{29.0}{6.2}{6.8} & \asymci{55.5}{7.2}{7.0} & \asymci{55.5}{7.2}{7.0} & \asymci{58.5}{7.2}{6.9} & \upperci{-26.5}{5.0} & 0.0004 \\
255 & \asymci{27.0}{6.0}{6.7} & \asymci{28.5}{6.1}{6.8} & \asymci{28.5}{6.1}{6.8} & \asymci{43.0}{7.0}{7.2} & \asymci{43.5}{7.0}{7.2} & \asymci{44.5}{7.0}{7.2} & \upperci{-14.5}{4.5} & 0.0004 \\
279 & \asymci{36.5}{6.7}{7.1} & \asymci{37.5}{6.7}{7.1} & \asymci{37.0}{6.7}{7.1} & \asymci{60.0}{7.1}{6.8} & \asymci{60.0}{7.1}{6.8} & \asymci{61.5}{7.1}{6.8} & \upperci{-22.5}{5.0} & 0.0004 \\
\bottomrule
\end{tabular}
 \caption{Episode-level actual-import exceedance rates on the recorded
execution stream. Intervals are exact binomial $95\%$ intervals; the final
columns compare QPP at $\lam=0.5$ with mean rollout. This endpoint is
exploratory.}
\label{tab:microgrid-actual-import}
\end{sidewaystable}

\paragraph{Cross-site B1 multiplicity audit.}
The frozen gate remains the four unadjusted sitewise lower bounds. The table
below is a post-hoc robustness audit of the same paired arrays. It adds a
null-centered one-sided test against the actual $0.10$ bar, applies Holm's
procedure across sites, and reports the conservative $98.75\%$ lower bound
that gives Bonferroni simultaneous $95\%$ coverage.
\begin{table}[h]
\centering
\footnotesize
\begin{tabular}{lrrrrr}
\toprule
Site & Gain & 95\% lower & 98.75\% lower & $p_{0.10}$ & Holm $p$ \\
\midrule
242 & 0.188 & 0.136 & 0.118 & 0.0004 & 0.0013 \\
253 & 0.153 & 0.102 & 0.085 & 0.0217 & 0.0435 \\
255 & 0.121 & 0.068 & 0.052 & 0.2097 & 0.2097 \\
279 & 0.185 & 0.141 & 0.125 & 0.0000 & 0.0001 \\
\bottomrule
\end{tabular}
 \caption{Post-hoc cross-site audit of the B1 relative $\CVaR_{0.1}$ gain.
The $95\%$ column reproduces the sitewise gate scale; the $98.75\%$ column
and Holm $p$ values address the four-site family in two complementary ways.}
\label{tab:cross-site-b1-audit}
\end{table}

\FloatBarrier
\subsubsection{Comparison with budget-matched Monte Carlo}

The following four tables compare every planner with $S=5$ Monte Carlo at
matched dynamics-evaluation cost. They use one Holm family per site and are
kept in site order.

\begin{sidewaystable}[p]
\centering
\footnotesize
\setlength{\tabcolsep}{3pt}
\begin{tabular}{@{}>{\raggedright\arraybackslash}p{0.30\linewidth} >{\raggedright\arraybackslash}p{0.10\linewidth} >{\raggedright\arraybackslash}p{0.07\linewidth} >{\centering\arraybackslash}p{0.15\linewidth} >{\raggedleft\arraybackslash}p{0.10\linewidth} >{\raggedleft\arraybackslash}p{0.10\linewidth} r r@{}}
\toprule
Comparison & Metric & Alt. & Estimate with 95\% CI & $p$ & $p_\mathrm{adj}$ & $n$ & $n_\mathrm{boot}$ \\
\midrule
QPP lambda=0 vs MC (S=5) & mean & two-sided & \asymci{-0.1091}{0.0984}{0.08493} & 0.0192 & 0.1344 & 200 & 10000 \\
QPP lambda=0 vs MC (S=5) & CVaR & two-sided & \asymci{-1.527}{0.713}{0.6356} & 0.0002 & 0.002 & 200 & 10000 \\
QPP lambda=0 vs MC (S=5) & V1 headroom loss & two-sided & \asymci{-0.49}{0.07}{0.07} & $6.311\times10^{-30}$ & $8.204\times10^{-29}$ & 200 & 10000 \\
QPP lambda=0.5 vs MC (S=5) & mean & two-sided & \asymci{-0.09481}{0.10519}{0.087899} & 0.05339 & 0.2136 & 200 & 10000 \\
QPP lambda=0.5 vs MC (S=5) & CVaR & two-sided & \asymci{-1.527}{0.757}{0.6683} & 0.0002 & 0.002 & 200 & 10000 \\
QPP lambda=0.5 vs MC (S=5) & V1 headroom loss & two-sided & \asymci{-0.495}{0.070}{0.070} & $3.155\times10^{-30}$ & $4.733\times10^{-29}$ & 200 & 10000 \\
QPP lambda=1 vs MC (S=5) & mean & two-sided & \asymci{-0.0894}{0.1102}{0.0903266} & 0.08169 & 0.2451 & 200 & 10000 \\
QPP lambda=1 vs MC (S=5) & CVaR & two-sided & \asymci{-1.549}{0.824}{0.6948} & 0.0002 & 0.002 & 200 & 10000 \\
QPP lambda=1 vs MC (S=5) & V1 headroom loss & two-sided & \asymci{-0.495}{0.070}{0.070} & $3.155\times10^{-30}$ & $4.733\times10^{-29}$ & 200 & 10000 \\
mean vs MC (S=5) & mean & two-sided & \asymci{-0.04449}{0.01642}{0.01857} & $9.999\times10^{-5}$ & 0.0012 & 200 & 10000 \\
mean vs MC (S=5) & CVaR & two-sided & \asymci{0.07031}{0.10902}{0.13999} & 0.2522 & 0.4544 & 200 & 10000 \\
mean vs MC (S=5) & V1 headroom loss & two-sided & \asymci{0.03}{0.02}{0.025} & 0.03125 & 0.1875 & 200 & 10000 \\
moment match vs MC (S=5) & mean & two-sided & \asymci{-0.04306}{0.01595}{0.01877} & $9.999\times10^{-5}$ & 0.0012 & 200 & 10000 \\
moment match vs MC (S=5) & CVaR & two-sided & \asymci{0.07398}{0.10949}{0.13702} & 0.2272 & 0.4544 & 200 & 10000 \\
moment match vs MC (S=5) & V1 headroom loss & two-sided & \asymci{0.03}{0.02}{0.025} & 0.03125 & 0.1875 & 200 & 10000 \\
\bottomrule
\end{tabular}
 \caption{Microgrid H1, every planner against $S = 5$ Monte Carlo at
matched $\Nevals$, two-sided, evaluation site $242$ (sites $253$, $255$,
$279$ follow). One Holm family per site.}
\label{tab:microgrid-vs-mc-242}
\end{sidewaystable}

\begin{sidewaystable}[p]
\centering
\footnotesize
\setlength{\tabcolsep}{3pt}
\begin{tabular}{@{}>{\raggedright\arraybackslash}p{0.30\linewidth} >{\raggedright\arraybackslash}p{0.10\linewidth} >{\raggedright\arraybackslash}p{0.07\linewidth} >{\centering\arraybackslash}p{0.15\linewidth} >{\raggedleft\arraybackslash}p{0.10\linewidth} >{\raggedleft\arraybackslash}p{0.10\linewidth} r r@{}}
\toprule
Comparison & Metric & Alt. & Estimate with 95\% CI & $p$ & $p_\mathrm{adj}$ & $n$ & $n_\mathrm{boot}$ \\
\midrule
QPP lambda=0 vs MC (S=5) & mean & two-sided & \asymci{-0.01644}{0.07776}{0.06589} & 0.6672 & 1 & 200 & 10000 \\
QPP lambda=0 vs MC (S=5) & CVaR & two-sided & \asymci{-1.104}{0.578}{0.499} & 0.0002 & 0.002 & 200 & 10000 \\
QPP lambda=0 vs MC (S=5) & V1 headroom loss & two-sided & \asymci{-0.47}{0.07}{0.07} & $1.01\times10^{-28}$ & $1.515\times10^{-27}$ & 200 & 10000 \\
QPP lambda=0.5 vs MC (S=5) & mean & two-sided & \asymci{-0.007932}{0.083038}{0.069022} & 0.8516 & 1 & 200 & 10000 \\
QPP lambda=0.5 vs MC (S=5) & CVaR & two-sided & \asymci{-1.113}{0.617}{0.529} & 0.0002 & 0.002 & 200 & 10000 \\
QPP lambda=0.5 vs MC (S=5) & V1 headroom loss & two-sided & \asymci{-0.465}{0.070}{0.070} & $2.019\times10^{-28}$ & $2.827\times10^{-27}$ & 200 & 10000 \\
QPP lambda=1 vs MC (S=5) & mean & two-sided & \asymci{-0.0005184}{0.0818216}{0.0678084} & 0.9902 & 1 & 200 & 10000 \\
QPP lambda=1 vs MC (S=5) & CVaR & two-sided & \asymci{-1.091}{0.627}{0.5155} & 0.0002 & 0.002 & 200 & 10000 \\
QPP lambda=1 vs MC (S=5) & V1 headroom loss & two-sided & \asymci{-0.465}{0.070}{0.070} & $2.019\times10^{-28}$ & $2.827\times10^{-27}$ & 200 & 10000 \\
mean vs MC (S=5) & mean & two-sided & \asymci{-0.03847}{0.01191}{0.01288} & $9.999\times10^{-5}$ & 0.0012 & 200 & 10000 \\
mean vs MC (S=5) & CVaR & two-sided & \asymci{0.07157}{0.09587}{0.09683} & 0.1614 & 1 & 200 & 10000 \\
mean vs MC (S=5) & V1 headroom loss & two-sided & \asymci{0.01}{0.01}{0.015} & 0.5 & 1 & 200 & 10000 \\
moment match vs MC (S=5) & mean & two-sided & \asymci{-0.03761}{0.01170}{0.01320} & $9.999\times10^{-5}$ & 0.0012 & 200 & 10000 \\
moment match vs MC (S=5) & CVaR & two-sided & \asymci{0.07262}{0.09572}{0.09958} & 0.1618 & 1 & 200 & 10000 \\
moment match vs MC (S=5) & V1 headroom loss & two-sided & \asymci{0.01}{0.01}{0.015} & 0.5 & 1 & 200 & 10000 \\
\bottomrule
\end{tabular}
 \caption{Microgrid H1 versus $S = 5$ Monte Carlo, evaluation site $253$. One
Holm family per site.}
\label{tab:microgrid-vs-mc-253}
\end{sidewaystable}

\begin{sidewaystable}[p]
\centering
\footnotesize
\setlength{\tabcolsep}{3pt}
\begin{tabular}{@{}>{\raggedright\arraybackslash}p{0.30\linewidth} >{\raggedright\arraybackslash}p{0.10\linewidth} >{\raggedright\arraybackslash}p{0.07\linewidth} >{\centering\arraybackslash}p{0.15\linewidth} >{\raggedleft\arraybackslash}p{0.10\linewidth} >{\raggedleft\arraybackslash}p{0.10\linewidth} r r@{}}
\toprule
Comparison & Metric & Alt. & Estimate with 95\% CI & $p$ & $p_\mathrm{adj}$ & $n$ & $n_\mathrm{boot}$ \\
\midrule
QPP lambda=0 vs MC (S=5) & mean & two-sided & \asymci{0.03842}{0.07949}{0.06378} & 0.3208 & 1 & 200 & 10000 \\
QPP lambda=0 vs MC (S=5) & CVaR & two-sided & \asymci{-0.9765}{0.6475}{0.5367} & 0.0002 & 0.002 & 200 & 10000 \\
QPP lambda=0 vs MC (S=5) & V1 headroom loss & two-sided & \asymci{-0.35}{0.065}{0.065} & $1.694\times10^{-21}$ & $2.541\times10^{-20}$ & 200 & 10000 \\
QPP lambda=0.5 vs MC (S=5) & mean & two-sided & \asymci{0.04568}{0.08355}{0.06672} & 0.2519 & 1 & 200 & 10000 \\
QPP lambda=0.5 vs MC (S=5) & CVaR & two-sided & \asymci{-1.009}{0.695}{0.5701} & 0.0002 & 0.002 & 200 & 10000 \\
QPP lambda=0.5 vs MC (S=5) & V1 headroom loss & two-sided & \asymci{-0.35}{0.065}{0.065} & $1.694\times10^{-21}$ & $2.541\times10^{-20}$ & 200 & 10000 \\
QPP lambda=1 vs MC (S=5) & mean & two-sided & \asymci{0.0527}{0.0821}{0.0646} & 0.173 & 0.8649 & 200 & 10000 \\
QPP lambda=1 vs MC (S=5) & CVaR & two-sided & \asymci{-0.9781}{0.6799}{0.5404} & 0.0002 & 0.002 & 200 & 10000 \\
QPP lambda=1 vs MC (S=5) & V1 headroom loss & two-sided & \asymci{-0.345}{0.065}{0.065} & $3.388\times10^{-21}$ & $4.405\times10^{-20}$ & 200 & 10000 \\
mean vs MC (S=5) & mean & two-sided & \asymci{-0.04491}{0.01127}{0.01014} & $9.999\times10^{-5}$ & 0.0012 & 200 & 10000 \\
mean vs MC (S=5) & CVaR & two-sided & \asymci{-0.07551}{0.08169}{0.083144} & 0.08159 & 0.5711 & 200 & 10000 \\
mean vs MC (S=5) & V1 headroom loss & two-sided & \asymci{0.01}{0.01}{0.015} & 0.5 & 1 & 200 & 10000 \\
moment match vs MC (S=5) & mean & two-sided & \asymci{-0.04421}{0.01126}{0.01014} & $9.999\times10^{-5}$ & 0.0012 & 200 & 10000 \\
moment match vs MC (S=5) & CVaR & two-sided & \asymci{-0.0708}{0.0835}{0.08301} & 0.1088 & 0.6527 & 200 & 10000 \\
moment match vs MC (S=5) & V1 headroom loss & two-sided & \asymci{0.01}{0.01}{0.015} & 0.5 & 1 & 200 & 10000 \\
\bottomrule
\end{tabular}
 \caption{Microgrid H1 versus $S = 5$ Monte Carlo, evaluation site $255$. One
Holm family per site.}
\label{tab:microgrid-vs-mc-255}
\end{sidewaystable}

\begin{sidewaystable}[p]
\centering
\footnotesize
\setlength{\tabcolsep}{3pt}
\begin{tabular}{@{}>{\raggedright\arraybackslash}p{0.30\linewidth} >{\raggedright\arraybackslash}p{0.10\linewidth} >{\raggedright\arraybackslash}p{0.07\linewidth} >{\centering\arraybackslash}p{0.15\linewidth} >{\raggedleft\arraybackslash}p{0.10\linewidth} >{\raggedleft\arraybackslash}p{0.10\linewidth} r r@{}}
\toprule
Comparison & Metric & Alt. & Estimate with 95\% CI & $p$ & $p_\mathrm{adj}$ & $n$ & $n_\mathrm{boot}$ \\
\midrule
QPP lambda=0 vs MC (S=5) & mean & two-sided & \asymci{-0.1514}{0.1113}{0.09372} & 0.002 & 0.01367 & 200 & 10000 \\
QPP lambda=0 vs MC (S=5) & CVaR & two-sided & \asymci{-1.701}{0.774}{0.661} & 0.0002 & 0.0024 & 200 & 10000 \\
QPP lambda=0 vs MC (S=5) & V1 headroom loss & two-sided & \asymci{-0.42}{0.07}{0.065} & $1.034\times10^{-25}$ & $1.551\times10^{-24}$ & 200 & 10000 \\
QPP lambda=0.5 vs MC (S=5) & mean & two-sided & \asymci{-0.1376}{0.1139}{0.09407} & 0.006799 & 0.0272 & 200 & 10000 \\
QPP lambda=0.5 vs MC (S=5) & CVaR & two-sided & \asymci{-1.656}{0.789}{0.6576} & 0.0002 & 0.0024 & 200 & 10000 \\
QPP lambda=0.5 vs MC (S=5) & V1 headroom loss & two-sided & \asymci{-0.415}{0.070}{0.0651} & $2.068\times10^{-25}$ & $2.895\times10^{-24}$ & 200 & 10000 \\
QPP lambda=1 vs MC (S=5) & mean & two-sided & \asymci{-0.1317}{0.1136}{0.09273} & 0.008999 & 0.0272 & 200 & 10000 \\
QPP lambda=1 vs MC (S=5) & CVaR & two-sided & \asymci{-1.627}{0.807}{0.6455} & 0.0002 & 0.0024 & 200 & 10000 \\
QPP lambda=1 vs MC (S=5) & V1 headroom loss & two-sided & \asymci{-0.415}{0.070}{0.070} & $2.068\times10^{-25}$ & $2.895\times10^{-24}$ & 200 & 10000 \\
mean vs MC (S=5) & mean & two-sided & \asymci{-0.04156}{0.02528}{0.03058} & 0.0011 & 0.009899 & 200 & 10000 \\
mean vs MC (S=5) & CVaR & two-sided & \asymci{0.1726}{0.23032}{0.2943} & 0.1696 & 0.3384 & 200 & 10000 \\
mean vs MC (S=5) & V1 headroom loss & two-sided & \asymci{0.05}{0.025}{0.03} & 0.001953 & 0.01367 & 200 & 10000 \\
moment match vs MC (S=5) & mean & two-sided & \asymci{-0.04118}{0.02481}{0.031309} & 0.0016 & 0.0128 & 200 & 10000 \\
moment match vs MC (S=5) & CVaR & two-sided & \asymci{0.1715}{0.22833}{0.2805} & 0.1692 & 0.3384 & 200 & 10000 \\
moment match vs MC (S=5) & V1 headroom loss & two-sided & \asymci{0.05}{0.025}{0.03} & 0.001953 & 0.01367 & 200 & 10000 \\
\bottomrule
\end{tabular}
 \caption{Microgrid H1 versus $S = 5$ Monte Carlo, evaluation site $279$. One
Holm family per site.}
\label{tab:microgrid-vs-mc-279}
\end{sidewaystable}

\FloatBarrier
\subsection{Exploratory objective and coupling sensitivity}
\label{sec:appendix-review-sensitivity}

\paragraph{Primary and exploratory analyses.}
The primary endpoints, evaluation sites, and decision thresholds were
specified before conducting the sensitivity analyses reported in this
section. The exact-stream evaluation, cross-site multiplicity analysis, and
objective and coupling sensitivity studies are therefore labeled
exploratory. These analyses do not alter the definitions or conclusions of
the primary evaluations.

\subsubsection{Objective sensitivity}

The objective audit uses an independent seed block and $n=1{,}000$ CRN
repeats per site, so each empirical $\CVaR_{0.1}$ averages 100 tail outcomes.
All arms retain $K=5$, $P=64$, $H=24$, ten CEM iterations and ten elites.
The objective audit changes one item at a time from QPP at $\alpha=0.2$:
CVaR after summing the full trajectory, planning at $\alpha=0.1$, or an
opt-in terminal penalty equal to the running squared-overload scale applied
to terminal headroom shortfall. These variants are evaluated separately from
the primary objective.

\begin{sidewaystable}[p]
\centering
\footnotesize
\setlength{\tabcolsep}{3pt}
\begin{tabular}{@{}l >{\raggedright\arraybackslash}p{0.28\linewidth} *{3}{>{\centering\arraybackslash}p{0.18\linewidth}}@{}}
\toprule
Site & Variant & $\Delta$ mean with 95\% CI & $\Delta$ CVaR$_{0.1}$ with 95\% CI & $\Delta$ V1 pp with 95\% CI \\
\midrule
242 & trajectory CVaR, $\alpha=0.2$ & \asymci{0.000}{0.000}{0.000} & \asymci{0.000}{0.000}{0.000} & \asymci{0.000}{0.000}{0.000} \\
242 & per-step CVaR, $\alpha=0.1$ & \asymci{0.005}{0.002}{0.002} & \asymci{0.016}{0.015}{0.017} & \asymci{-0.100}{0.500}{0.500} \\
242 & terminal-aligned penalty & \asymci{0.001}{0.002}{0.001} & \asymci{0.001}{0.013}{0.015} & \asymci{0.300}{0.500}{0.500} \\
253 & trajectory CVaR, $\alpha=0.2$ & \asymci{0.000}{0.000}{0.000} & \asymci{0.000}{0.000}{0.000} & \asymci{0.000}{0.000}{0.000} \\
253 & per-step CVaR, $\alpha=0.1$ & \asymci{0.004}{0.001}{0.002} & \asymci{0.019}{0.013}{0.015} & \asymci{0.100}{0.300}{0.400} \\
253 & terminal-aligned penalty & \asymci{0.001}{0.002}{0.001} & \asymci{0.010}{0.014}{0.014} & \asymci{0.100}{0.300}{0.302} \\
255 & trajectory CVaR, $\alpha=0.2$ & \asymci{0.000}{0.000}{0.000} & \asymci{0.000}{0.000}{0.000} & \asymci{0.000}{0.000}{0.000} \\
255 & per-step CVaR, $\alpha=0.1$ & \asymci{0.002}{0.002}{0.001} & \asymci{0.005}{0.012}{0.012} & \asymci{0.000}{0.300}{0.300} \\
255 & terminal-aligned penalty & \asymci{-0.000}{0.002}{0.001} & \asymci{0.003}{0.014}{0.014} & \asymci{0.100}{0.300}{0.400} \\
279 & trajectory CVaR, $\alpha=0.2$ & \asymci{0.000}{0.000}{0.000} & \asymci{0.000}{0.000}{0.000} & \asymci{0.000}{0.000}{0.000} \\
279 & per-step CVaR, $\alpha=0.1$ & \asymci{0.002}{0.002}{0.002} & \asymci{0.001}{0.022}{0.021} & \asymci{0.200}{0.200}{0.300} \\
279 & terminal-aligned penalty & \asymci{-0.001}{0.002}{0.003} & \asymci{-0.003}{0.020}{0.019} & \asymci{0.000}{0.400}{0.400} \\
\bottomrule
\end{tabular}
 \caption{Objective sensitivity relative to the original per-step,
$\alpha=0.2$ QPP arm. Entries are variant-minus-original paired differences
with descriptive percentile-bootstrap intervals; lower cost is better, and
V1 differences are percentage points.
Total-trajectory and per-step scores select the same plans here because the
shared-rank cost paths remain ordered consistently across time.}
\label{tab:objective-sensitivity}
\end{sidewaystable}

\subsubsection{Coupling and node sensitivity}

The separate $n=200$ factorial arms use equal-weight stratified scenarios and the same
total-trajectory objective. Fixed versus randomized nodes isolates node
randomization; shared versus time-varying rank assignments isolates temporal
coupling. It is a mechanism diagnostic, not a claim that any arm is a better
approximation of the independent execution process.

\begin{sidewaystable}[p]
\centering
\footnotesize
\setlength{\tabcolsep}{3pt}
\begin{tabular}{@{}l >{\raggedright\arraybackslash}p{0.28\linewidth} *{3}{>{\centering\arraybackslash}p{0.18\linewidth}}@{}}
\toprule
Site & Variant & $\Delta$ mean with 95\% CI & $\Delta$ CVaR$_{0.1}$ with 95\% CI & $\Delta$ V1 pp with 95\% CI \\
\midrule
242 & fixed, shared ranks & \asymci{-0.016}{0.007}{0.007} & \asymci{-0.042}{0.061}{0.051} & \asymci{0.000}{2.000}{2.000} \\
242 & fixed, time-varying ranks & \asymci{-0.009}{0.061}{0.069} & \asymci{1.061}{0.411}{0.439} & \asymci{63.000}{7.000}{6.500} \\
242 & randomized, shared ranks & \asymci{0.085}{0.055}{0.061} & \asymci{0.938}{0.370}{0.369} & \asymci{39.000}{6.500}{7.000} \\
242 & randomized, time-varying ranks & \asymci{0.019}{0.063}{0.069} & \asymci{1.053}{0.441}{0.458} & \asymci{61.000}{7.000}{7.000} \\
253 & fixed, shared ranks & \asymci{-0.014}{0.004}{0.004} & \asymci{-0.041}{0.033}{0.031} & \asymci{0.000}{0.000}{0.000} \\
253 & fixed, time-varying ranks & \asymci{-0.084}{0.045}{0.051} & \asymci{0.740}{0.330}{0.319} & \asymci{56.500}{7.000}{6.500} \\
253 & randomized, shared ranks & \asymci{0.019}{0.041}{0.045} & \asymci{0.717}{0.292}{0.255} & \asymci{38.000}{7.000}{7.000} \\
253 & randomized, time-varying ranks & \asymci{-0.057}{0.047}{0.050} & \asymci{0.744}{0.333}{0.323} & \asymci{54.500}{7.000}{7.000} \\
255 & fixed, shared ranks & \asymci{-0.015}{0.006}{0.006} & \asymci{-0.012}{0.047}{0.065} & \asymci{-0.500}{1.000}{0.500} \\
255 & fixed, time-varying ranks & \asymci{-0.145}{0.034}{0.040} & \asymci{0.471}{0.247}{0.262} & \asymci{40.000}{6.500}{6.500} \\
255 & randomized, shared ranks & \asymci{-0.036}{0.035}{0.038} & \asymci{0.535}{0.235}{0.238} & \asymci{28.500}{6.000}{6.500} \\
255 & randomized, time-varying ranks & \asymci{-0.123}{0.035}{0.039} & \asymci{0.479}{0.245}{0.261} & \asymci{39.000}{6.500}{6.500} \\
279 & fixed, shared ranks & \asymci{-0.014}{0.008}{0.007} & \asymci{-0.022}{0.059}{0.057} & \asymci{0.500}{0.500}{1.000} \\
279 & fixed, time-varying ranks & \asymci{0.022}{0.067}{0.076} & \asymci{1.262}{0.427}{0.411} & \asymci{55.500}{7.000}{6.500} \\
279 & randomized, shared ranks & \asymci{0.130}{0.062}{0.068} & \asymci{1.150}{0.370}{0.365} & \asymci{35.500}{6.500}{6.500} \\
279 & randomized, time-varying ranks & \asymci{0.042}{0.063}{0.067} & \asymci{1.080}{0.397}{0.396} & \asymci{51.000}{7.000}{7.000} \\
\bottomrule
\end{tabular}
 \caption{Matched $2\times2$ coupling diagnostic relative to the original QPP
arm. Time-varying ranks materially worsen the realized tail and V1 at every
site; randomized shared ranks are also worse, while fixed shared ranks stay
close. This separates temporal coupling from determinism without changing the
trajectory budget.}
\label{tab:coupling-factorial}
\end{sidewaystable}

\FloatBarrier
\clearpage
\subsection{Microgrid: trajectory-count sweep, IID and coupled}
\label{sec:appendix-microgrid-crossover}

Figure~\ref{fig:microgrid-crossover} gives the cross-site pattern before the
summary table. With the search allocation fixed, all four IID comparisons
meet the trajectory-count rule because their one-sided lower bounds remain
above $5\%$ at $S=200$. The clipped coupled sampler does not meet its
equivalence rule at any site.

\begin{figure}[h]
\centering
\includegraphics[width=0.98\textwidth]{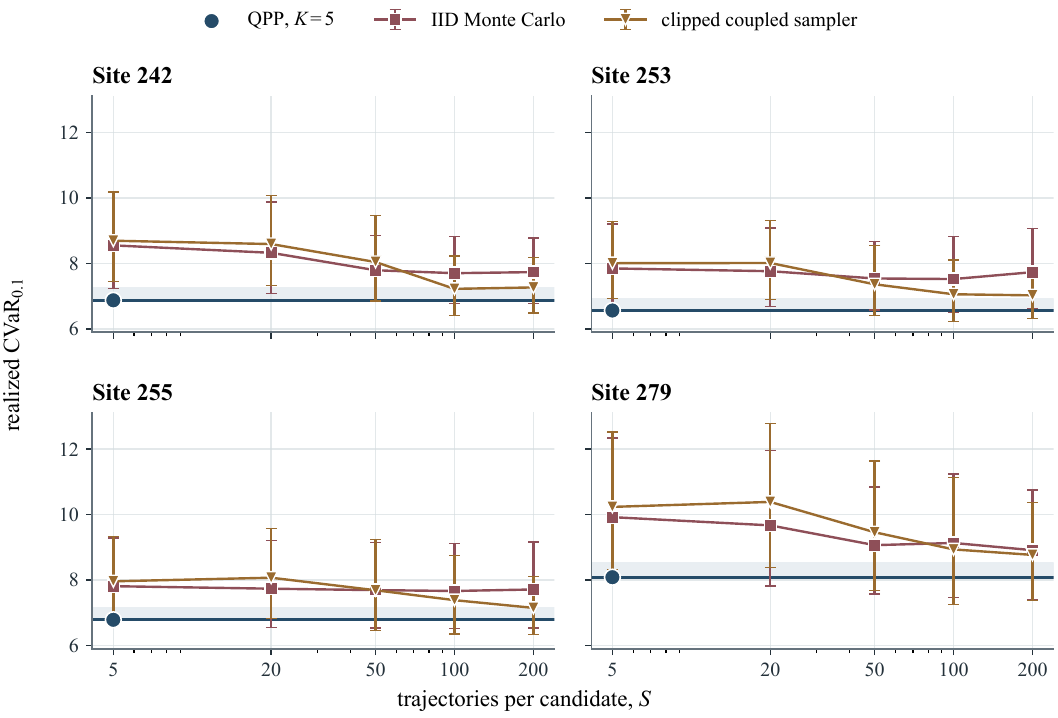}
\caption{Microgrid trajectory-count sweeps at all four held-out sites. Each
panel compares IID Monte Carlo and the clipped coupled sampler with $K=5$
QPP as $S$ increases. Error bars are bootstrap $95\%$ intervals; the narrow
blue region is the $5\%$ sampler-denominated comparison band above QPP. The
CEM population, elite count and iteration count are fixed, so this is not a
best-allocation total-budget frontier. $n=200$ per point.}
\label{fig:microgrid-crossover}
\end{figure}

\begin{table}[t]
\centering
\small
\setlength{\tabcolsep}{4.5pt}
\begin{tabular}{@{}lcccc@{}}
\toprule
Site & IID gap with 95\% lower bound & B3 & Clipped gap with 95\% CI & Equivalence \\
\midrule
242 & \asymci{11.1\%}{4.7\%}{\infty} & Satisfied & \asymci{5.4\%}{4.3\%}{4.7\%} & Not satisfied \\
253 & \asymci{15.1\%}{5.4\%}{\infty} & Satisfied & \asymci{6.6\%}{3.3\%}{3.4\%} & Not satisfied \\
255 & \asymci{12.0\%}{5.0\%}{\infty} & Satisfied & \asymci{5.1\%}{3.5\%}{3.5\%} & Not satisfied \\
279 & \asymci{9.3\%}{4.1\%}{\infty} & Satisfied & \asymci{7.8\%}{4.5\%}{4.9\%} & Not satisfied \\
\bottomrule
\end{tabular}
 \caption{Cross-site decision summary at $S=200$. Positive gaps mean that the
sampler has higher realized $\CVaR_{0.1}$ than QPP. IID entries pair the point
estimate with the one-sided $95\%$ lower bound used by B3; clipped-coupled
entries use the two-sided interval required by the equivalence diagnostic.
All four B3 criteria are satisfied by the lower-bound branch, while none of
the four equivalence criteria is satisfied.}
\label{tab:crossover-summary}
\end{table}

\FloatBarrier
\subsection{Microgrid: quantile ensemble versus budget-matched PETS}
\label{sec:appendix-microgrid-ensemble}
\label{sec:appendix-ensemble-detail}

\subsubsection{Design and cross-site summary}

Replay windows come from each site's final $30\%$ of rows. A valid start is
an hour-of-day-zero row with at least $H+1=25$ rows remaining, yielding $328$
eligible starts; each repeat draws one start uniformly with replacement and
uses the same draw for every arm. The cited $200$ repeats contain $152$
distinct starts. Source admissibility is satisfied (pooled PIT deviation
$0.0113$).
The design check that selects replay is also conditional on the eventual four
sites (median reserve gap $0.1750$ against $0.15$; site gaps
$0.025$/$0.225$/$0.200$/$0.150$ at $242$/$253$/$255$/$279$). The quantile
ensemble beats PETS on tail risk at three sites and on $V1$ at all four, but
the single-model $25$-level arm matches those margins, PETS is
indistinguishable from plain Monte Carlo, and the epistemic-premium gates are
unsupported. Accordingly the result supports deterministic quantile
evaluation on replay, not ensemble allocation or epistemic-risk claims.

\subsubsection{Site-level results}

\begin{sidewaystable}[p]
\centering
\footnotesize
\setlength{\tabcolsep}{3pt}
\begin{tabular}{@{}>{\raggedright\arraybackslash}p{0.30\linewidth} >{\raggedright\arraybackslash}p{0.10\linewidth} >{\raggedright\arraybackslash}p{0.07\linewidth} >{\centering\arraybackslash}p{0.15\linewidth} >{\raggedleft\arraybackslash}p{0.10\linewidth} >{\raggedleft\arraybackslash}p{0.10\linewidth} r r@{}}
\toprule
Comparison & Metric & Alt. & Estimate with 95\% CI & $p$ & $p_\mathrm{adj}$ & $n$ & $n_\mathrm{boot}$ \\
\midrule
trajectory ensemble (M=5, K=5) vs PETS (M=5, S=5) & mean & two-sided & \asymci{0.1748}{0.0089}{0.0085} & $9.999\times10^{-5}$ & 0.0009999 & 200 & 10000 \\
trajectory ensemble (M=5, K=5) vs PETS (M=5, S=5) & CVaR & lower & \asymci{0.1435}{\infty}{0.0213} & 1 & 1 & 200 & 10000 \\
trajectory ensemble (M=5, K=5) vs PETS (M=5, S=5) & violation & lower & \asymci{-0.11}{\infty}{0.035} & $2.384\times10^{-7}$ & $2.861\times10^{-6}$ & 200 & 10000 \\
QPP ensemble (M=1, K=25) vs PETS (M=5, S=5) & mean & two-sided & \asymci{0.1692}{0.0088}{0.0088} & $9.999\times10^{-5}$ & 0.0009999 & 200 & 10000 \\
QPP ensemble (M=1, K=25) vs PETS (M=5, S=5) & CVaR & two-sided & \asymci{0.1394}{0.0243}{0.0262} & 0.0002 & 0.0012 & 200 & 10000 \\
QPP ensemble (M=1, K=25) vs PETS (M=5, S=5) & violation & two-sided & \asymci{-0.115}{0.045}{0.040} & $2.384\times10^{-7}$ & $2.861\times10^{-6}$ & 200 & 10000 \\
trajectory ensemble (M=25, K=1) vs PETS (M=5, S=5) & mean & two-sided & \asymci{-0.07967}{0.00827}{0.00805} & $9.999\times10^{-5}$ & 0.0009999 & 200 & 10000 \\
trajectory ensemble (M=25, K=1) vs PETS (M=5, S=5) & CVaR & lower & \asymci{-0.07639}{\infty}{0.01803} & $9.999\times10^{-5}$ & 0.0009999 & 200 & 10000 \\
trajectory ensemble (M=25, K=1) vs PETS (M=5, S=5) & violation & lower & \asymci{0.055}{\infty}{0.030} & 1 & 1 & 200 & 10000 \\
MC (S=25) vs PETS (M=5, S=5) & mean & two-sided & \asymci{-0.0009754}{0.0085136}{0.0085204} & 0.8297 & 1 & 200 & 10000 \\
MC (S=25) vs PETS (M=5, S=5) & CVaR & two-sided & \asymci{0.00427}{0.02637}{0.02485} & 0.8115 & 1 & 200 & 10000 \\
MC (S=25) vs PETS (M=5, S=5) & violation & two-sided & \asymci{0.02}{0.02}{0.025} & 0.2188 & 1 & 200 & 10000 \\
\bottomrule
\end{tabular}
 \caption{Microgrid ensemble comparison at evaluation site $242$ (sites
$253$, $255$, $279$ follow): five models with five quantile levels each,
one reference model with $25$ levels, a $25$-model mean-path ensemble and
$25$-sample plain Monte Carlo, each against a budget-matched PETS-style
ensemble. Every arm uses $25$ evaluations per candidate and step and is
executed by held-out trace replay. One Holm family per site.}
\label{tab:ensemble-vs-pets-242}
\end{sidewaystable}

\begin{sidewaystable}[p]
\centering
\footnotesize
\setlength{\tabcolsep}{3pt}
\begin{tabular}{@{}>{\raggedright\arraybackslash}p{0.30\linewidth} >{\raggedright\arraybackslash}p{0.10\linewidth} >{\raggedright\arraybackslash}p{0.07\linewidth} >{\centering\arraybackslash}p{0.15\linewidth} >{\raggedleft\arraybackslash}p{0.10\linewidth} >{\raggedleft\arraybackslash}p{0.10\linewidth} r r@{}}
\toprule
Comparison & Metric & Alt. & Estimate with 95\% CI & $p$ & $p_\mathrm{adj}$ & $n$ & $n_\mathrm{boot}$ \\
\midrule
trajectory ensemble (M=5, K=5) vs PETS (M=5, S=5) & mean & two-sided & \asymci{0.06309}{0.07691}{0.06221} & 0.07749 & 0.5424 & 200 & 10000 \\
trajectory ensemble (M=5, K=5) vs PETS (M=5, S=5) & CVaR & lower & \asymci{-1.01}{\infty}{0.5121} & $9.999\times10^{-5}$ & 0.0009999 & 200 & 10000 \\
trajectory ensemble (M=5, K=5) vs PETS (M=5, S=5) & violation & lower & \asymci{-0.28}{\infty}{0.05} & $1.388\times10^{-17}$ & $1.665\times10^{-16}$ & 200 & 10000 \\
QPP ensemble (M=1, K=25) vs PETS (M=5, S=5) & mean & two-sided & \asymci{0.06054}{0.07420}{0.06216} & 0.08509 & 0.5424 & 200 & 10000 \\
QPP ensemble (M=1, K=25) vs PETS (M=5, S=5) & CVaR & two-sided & \asymci{-0.9997}{0.6643}{0.5935} & 0.0002 & 0.0016 & 200 & 10000 \\
QPP ensemble (M=1, K=25) vs PETS (M=5, S=5) & violation & two-sided & \asymci{-0.275}{0.060}{0.060} & $5.551\times10^{-17}$ & $6.106\times10^{-16}$ & 200 & 10000 \\
trajectory ensemble (M=25, K=1) vs PETS (M=5, S=5) & mean & two-sided & \asymci{-0.0529}{0.00614}{0.00641} & $9.999\times10^{-5}$ & 0.0009999 & 200 & 10000 \\
trajectory ensemble (M=25, K=1) vs PETS (M=5, S=5) & CVaR & lower & \asymci{-0.0139}{\infty}{0.03271} & 0.2165 & 1 & 200 & 10000 \\
trajectory ensemble (M=25, K=1) vs PETS (M=5, S=5) & violation & lower & \asymci{0.05}{\infty}{0.025} & 1 & 1 & 200 & 10000 \\
MC (S=25) vs PETS (M=5, S=5) & mean & two-sided & \asymci{-0.01139}{0.01961}{0.013598} & 0.2013 & 1 & 200 & 10000 \\
MC (S=25) vs PETS (M=5, S=5) & CVaR & two-sided & \asymci{-0.09467}{0.18913}{0.11452} & 0.2158 & 1 & 200 & 10000 \\
MC (S=25) vs PETS (M=5, S=5) & violation & two-sided & \asymci{0.02}{0.03}{0.03} & 0.3438 & 1 & 200 & 10000 \\
\bottomrule
\end{tabular}
 \caption{Microgrid, quantile ensemble versus budget-matched PETS,
evaluation site $253$. One Holm family per site.}
\label{tab:ensemble-vs-pets-253}
\end{sidewaystable}

\begin{sidewaystable}[p]
\centering
\footnotesize
\setlength{\tabcolsep}{3pt}
\begin{tabular}{@{}>{\raggedright\arraybackslash}p{0.30\linewidth} >{\raggedright\arraybackslash}p{0.10\linewidth} >{\raggedright\arraybackslash}p{0.07\linewidth} >{\centering\arraybackslash}p{0.15\linewidth} >{\raggedleft\arraybackslash}p{0.10\linewidth} >{\raggedleft\arraybackslash}p{0.10\linewidth} r r@{}}
\toprule
Comparison & Metric & Alt. & Estimate with 95\% CI & $p$ & $p_\mathrm{adj}$ & $n$ & $n_\mathrm{boot}$ \\
\midrule
trajectory ensemble (M=5, K=5) vs PETS (M=5, S=5) & mean & two-sided & \asymci{0.102}{0.05432}{0.0449} & $9.999\times10^{-5}$ & 0.0009999 & 200 & 10000 \\
trajectory ensemble (M=5, K=5) vs PETS (M=5, S=5) & CVaR & lower & \asymci{-0.7446}{\infty}{0.3082} & $9.999\times10^{-5}$ & 0.0009999 & 200 & 10000 \\
trajectory ensemble (M=5, K=5) vs PETS (M=5, S=5) & violation & lower & \asymci{-0.185}{\infty}{0.045} & $7.276\times10^{-12}$ & $8.731\times10^{-11}$ & 200 & 10000 \\
QPP ensemble (M=1, K=25) vs PETS (M=5, S=5) & mean & two-sided & \asymci{0.09645}{0.05510}{0.04645} & $9.999\times10^{-5}$ & 0.0009999 & 200 & 10000 \\
QPP ensemble (M=1, K=25) vs PETS (M=5, S=5) & CVaR & two-sided & \asymci{-0.7609}{0.4171}{0.3645} & 0.0002 & 0.0012 & 200 & 10000 \\
QPP ensemble (M=1, K=25) vs PETS (M=5, S=5) & violation & two-sided & \asymci{-0.185}{0.055}{0.050} & $1.455\times10^{-11}$ & $1.601\times10^{-10}$ & 200 & 10000 \\
trajectory ensemble (M=25, K=1) vs PETS (M=5, S=5) & mean & two-sided & \asymci{-0.03784}{0.00449}{0.00436} & $9.999\times10^{-5}$ & 0.0009999 & 200 & 10000 \\
trajectory ensemble (M=25, K=1) vs PETS (M=5, S=5) & CVaR & lower & \asymci{-0.03502}{\infty}{0.01970} & 0.0033 & 0.0165 & 200 & 10000 \\
trajectory ensemble (M=25, K=1) vs PETS (M=5, S=5) & violation & lower & \asymci{0.01}{\infty}{0.015} & 1 & 1 & 200 & 10000 \\
MC (S=25) vs PETS (M=5, S=5) & mean & two-sided & \asymci{-0.004585}{0.010275}{0.007781} & 0.3963 & 1 & 200 & 10000 \\
MC (S=25) vs PETS (M=5, S=5) & CVaR & two-sided & \asymci{-0.04817}{0.09493}{0.05963} & 0.1762 & 0.7047 & 200 & 10000 \\
MC (S=25) vs PETS (M=5, S=5) & violation & two-sided & \asymci{0}{0.02}{0.02} & 1 & 1 & 200 & 10000 \\
\bottomrule
\end{tabular}
 \caption{Microgrid, quantile ensemble versus budget-matched PETS,
evaluation site $255$. One Holm family per site.}
\label{tab:ensemble-vs-pets-255}
\end{sidewaystable}

\begin{sidewaystable}[p]
\centering
\footnotesize
\setlength{\tabcolsep}{3pt}
\begin{tabular}{@{}>{\raggedright\arraybackslash}p{0.30\linewidth} >{\raggedright\arraybackslash}p{0.10\linewidth} >{\raggedright\arraybackslash}p{0.07\linewidth} >{\centering\arraybackslash}p{0.15\linewidth} >{\raggedleft\arraybackslash}p{0.10\linewidth} >{\raggedleft\arraybackslash}p{0.10\linewidth} r r@{}}
\toprule
Comparison & Metric & Alt. & Estimate with 95\% CI & $p$ & $p_\mathrm{adj}$ & $n$ & $n_\mathrm{boot}$ \\
\midrule
trajectory ensemble (M=5, K=5) vs PETS (M=5, S=5) & mean & two-sided & \asymci{0.09885}{0.03230}{0.02645} & $9.999\times10^{-5}$ & 0.0009999 & 200 & 10000 \\
trajectory ensemble (M=5, K=5) vs PETS (M=5, S=5) & CVaR & lower & \asymci{-0.3026}{\infty}{0.1805} & 0.0004 & 0.0028 & 200 & 10000 \\
trajectory ensemble (M=5, K=5) vs PETS (M=5, S=5) & violation & lower & \asymci{-0.23}{\infty}{0.05} & $1.421\times10^{-14}$ & $1.705\times10^{-13}$ & 200 & 10000 \\
QPP ensemble (M=1, K=25) vs PETS (M=5, S=5) & mean & two-sided & \asymci{0.09529}{0.03061}{0.02631} & $9.999\times10^{-5}$ & 0.0009999 & 200 & 10000 \\
QPP ensemble (M=1, K=25) vs PETS (M=5, S=5) & CVaR & two-sided & \asymci{-0.3073}{0.2580}{0.2018} & 0.0007999 & 0.0048 & 200 & 10000 \\
QPP ensemble (M=1, K=25) vs PETS (M=5, S=5) & violation & two-sided & \asymci{-0.23}{0.06}{0.055} & $2.842\times10^{-14}$ & $3.126\times10^{-13}$ & 200 & 10000 \\
trajectory ensemble (M=25, K=1) vs PETS (M=5, S=5) & mean & two-sided & \asymci{-0.08109}{0.00932}{0.00974} & $9.999\times10^{-5}$ & 0.0009999 & 200 & 10000 \\
trajectory ensemble (M=25, K=1) vs PETS (M=5, S=5) & CVaR & lower & \asymci{-0.02996}{\infty}{0.04587} & 0.1375 & 0.6874 & 200 & 10000 \\
trajectory ensemble (M=25, K=1) vs PETS (M=5, S=5) & violation & lower & \asymci{0.03}{\infty}{0.02} & 1 & 1 & 200 & 10000 \\
MC (S=25) vs PETS (M=5, S=5) & mean & two-sided & \asymci{-0.002265}{0.011775}{0.011303} & 0.6988 & 1 & 200 & 10000 \\
MC (S=25) vs PETS (M=5, S=5) & CVaR & two-sided & \asymci{-0.0009552}{0.0938948}{0.0681352} & 0.9813 & 1 & 200 & 10000 \\
MC (S=25) vs PETS (M=5, S=5) & violation & two-sided & \asymci{0.01}{0.025}{0.03} & 0.7266 & 1 & 200 & 10000 \\
\bottomrule
\end{tabular}
 \caption{Microgrid, quantile ensemble versus budget-matched PETS,
evaluation site $279$. One Holm family per site.}
\label{tab:ensemble-vs-pets-279}
\label{jmlrend}
\end{sidewaystable}
 
\end{document}